\documentclass{article}

\usepackage{iclr2027_conference,times}
\usepackage{amsmath,amsfonts,bm}

\def\eqref#1{equation~\ref{#1}}
\def\Eqref#1{Equation~\ref{#1}}
\def\1{\bm{1}}

\DeclareMathAlphabet{\mathsfit}{\encodingdefault}{\sfdefault}{m}{sl}
\SetMathAlphabet{\mathsfit}{bold}{\encodingdefault}{\sfdefault}{bx}{n}

\usepackage{hyperref}
\usepackage{url}
\usepackage{graphicx}
\usepackage{algorithm}
\usepackage{algpseudocode}

\usepackage{booktabs}
\usepackage{multirow}

\usepackage{amsthm}
\usepackage{amssymb}

\usepackage{tikz}
\usepackage{amsmath}
\usepackage{xcolor}
\usetikzlibrary{arrows.meta,positioning,calc,shapes.symbols}

\definecolor{radred}{HTML}{C94841}
\definecolor{radpurple}{HTML}{7057B8}
\definecolor{radorange}{HTML}{D98B12}
\definecolor{radblue}{HTML}{2F6FAE}
\definecolor{radgreen}{HTML}{2A9D62}
\definecolor{radgray}{HTML}{566573}
\definecolor{panelbg}{HTML}{F4F6F8}
\definecolor{panelbd}{HTML}{C7D7E5}
\definecolor{titleblue}{HTML}{173B66}
\definecolor{reachbd}{HTML}{E39A8C}
\definecolor{reachfill}{HTML}{FCEEEB}
\definecolor{reachtitle}{HTML}{B22A1E}
\definecolor{multbd}{HTML}{C8B5E9}
\definecolor{multfill}{HTML}{F1ECFA}
\definecolor{multtitle}{HTML}{4A2E84}
\definecolor{rewardbd}{HTML}{E5B04A}
\definecolor{rewardfill}{HTML}{FDF3DF}
\definecolor{rewardtitle}{HTML}{8F2F10}
\definecolor{actorbd}{HTML}{88C2F0}
\definecolor{actorfill}{HTML}{EAF4FD}
\definecolor{actortitle}{HTML}{154A97}
\definecolor{greenbd}{HTML}{8FD2A0}
\definecolor{greenfill}{HTML}{E6F6E7}
\definecolor{arrowgray}{HTML}{384C5A}
\definecolor{feedbackpurple}{HTML}{7E39BC}
\definecolor{loopblue}{HTML}{0E58A8}
\definecolor{orangecallout}{HTML}{F4D08D}
\definecolor{bluecallout}{HTML}{D8ECFA}
\definecolor{lightpink}{HTML}{F6D7D1}

\newtheorem{proposition}{Proposition}
\newtheorem{corollary}{Corollary}
\newtheorem{lemma}{Lemma}
\newtheorem{theorem}{Theorem}

\title{Reachability-Aware Diffusion Policy Optimization}

\author{Hikmet~Simsir\thanks{Equal contribution.},
Kutay Demiray\footnotemark[1]
\& Ozgur S. Oguz\\
Department of Computer Engineering, Bilkent University\\
\texttt{\{hikmet.simsir,kutay.demiray,ozgur.oguz\}@bilkent.edu.tr}}

\iclrfinalcopy

\begin{document}

\maketitle

% ============================================================
% ABSTRACT
% ============================================================

\begin{abstract}

Diffusion policies provide expressive action distributions for continuous-control reinforcement learning. However, safety-aware online diffusion policy optimization remains underexplored, particularly methods that use predictive reachability information without an explicit dynamics model.
We propose Reachability-Aware Diffusion Policy Optimization (RADPO), a model-free method that combines predictive first-hit safety estimation with cumulative-cost budget feedback. 
RADPO learns a discounted first-hit reachability value that captures the discounted risk of a cost event, assigns larger weight to events that occur sooner, and uses this signal to shape the reward.
A separate dual-like multiplier adjusts the shaping strength according to realized episodic costs relative to a prescribed budget. 
The diffusion actor improves through weighted denoising regression on candidate actions scored by the reward critic.
Our approach requires neither a learned dynamics model, action gradients through the critics, nor differentiation through the reverse diffusion sampler.
We establish theoretical properties of the reachability value and show that accumulated reachability penalty provides a conservative surrogate for future discounted cumulative cost. 
Across ten continuous-control safety tasks, RADPO achieves competitive reward-cost trade-offs, with substantial reductions in constraint violations on several tasks relative to the compared baselines.
Our theoretical and empirical analysis supports that combining reachability with cumulative budget feedback is a viable approach to safety-aware diffusion policies.

\end{abstract}

% ============================================================
% INTRODUCTION
% ============================================================
\section{Introduction}
  \label{sec:introduction}

  %Safe reinforcement learning aims to maximize task performance while keepingexpected cumulative safety cost below a prescribed budget. A centralchallenge is anticipating unsafe outcomes: an action may incur no immediatecost yet move the agent toward future violations. Conventional cost criticsaddress this through predictions of accumulated future cost. Discountedfirst-hit reachability offers a different signal, capturing whether a hazardwill be encountered while assigning greater weight to earlier encounters.We investigate whether this signal can guide policy improvement whilecumulative-cost feedback separately regulates the safety budget.

  A common formulation of safe reinforcement learning aims to maximize task performance while keeping expected cumulative safety cost below some budget~\citep{achiam2017constrained,stooke2020responsive}. A central challenge is anticipating unsafe outcomes: an action may incur no immediate cost yet move the agent toward future violations. Cost critics address this by estimating accumulated future cost~\citep{liu2022constrained, wu2024offpolicy}, which captures violation magnitude and repeated exposure but does not explicitly isolate when the first cost encounter occurs. Reachability-based safety estimation provides a complementary
perspective by explicitly predicting future constraint
violations~\citep{yu2022reachability,ganai2023iterative} instead. Discounted first-hit reachability captures whether a cost will be encountered while assigning greater weight to earlier encounters. We investigate whether this predictive signal can guide policy improvement while cumulative-cost feedback separately regulates the safety budget.

  % Figure~\ref{fig:reachability_motivation} illustrates how identical
  % goal-reaching success can coexist with different cost exposure.
  % In this shared navigation scenario, RADPO maintains positive cost
  % clearance, whereas SAC-Lag sometimes enters cost regions.

  Diffusion policies provide expressive action distributions for continuous control~\citep{wang2023diffusion,chi2024diffusion} and can be optimized online through value-weighted denoising regression~\citep{ma2025efficient}. However, most online diffusion-policy optimization methods focus on reward maximization~\citep{yang2023dipo,psenka2024qsm,ren2024dppo,ma2025efficient}, while incorporating predictive safety information without an explicit dynamics model is relatively underexplored. We therefore propose \textbf{Reachability-Aware Diffusion Policy Optimization (RADPO)}, a model-free approach that combines discounted first-hit reachability with cumulative-cost budget feedback.

  \begin{figure}[t]
      \centering
      \includegraphics[width=\linewidth]{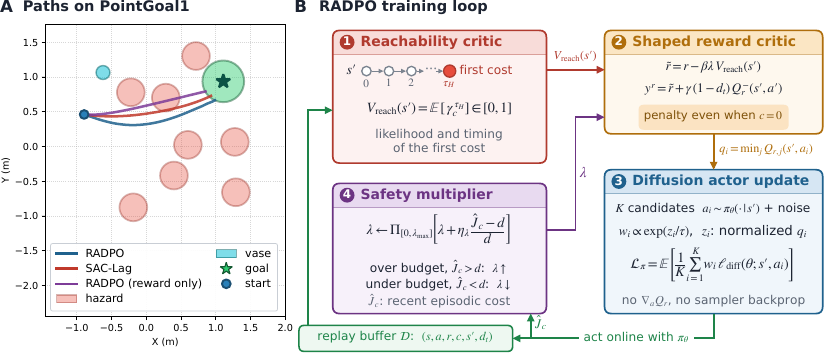}
      \vspace{-4mm}
      
  \caption{
  % \textbf{Navigation evaluation paths obtained from fully trained policies and hazard exposure on PointGoal1.}
  \textbf{(A)} Evaluation paths obtained from fully trained policies on the same fixed PointGoal1 layout. See Figure~\ref{fig:state_dist} for clearance distribution of these paths.
  \textbf{(B)} Illustrative diagram for the RADPO algorithm.
  }
      \label{fig:reachability_motivation}
    \vspace{-4mm}
  \end{figure}
  
  RADPO learns a discounted first-hit reachability value and uses the predicted reachability of successor states to shape the reward. An adaptive safety multiplier controls the shaping strength according to realized episodic costs relative to the prescribed budget, preserving information about cost magnitudes and repeated violations. The diffusion actor then improves through weighted denoising regression on candidate actions scored by the reward critic. This requires neither a learned dynamics model, action gradients through the critics, nor differentiation through the reverse diffusion sampler. This mechanism guides learning toward safer behavior. Figure~\ref{fig:reachability_motivation}.A illustrates this behavior in a navigation scenario. SAC-Lag follows a greedier path close to the hazard boundaries and occasionally enters them, whereas RADPO takes a more conservative route with greater clearance.

  Our main contributions are:
  \begin{itemize}
      %\item We introduce predictive reward shaping based on discounted first-hit reachability, coupled with cumulative-cost budget feedback, and integrate it into diffusion policy optimization through value-weighted denoising regression.

      %\item We propose a novel model-free algorithm that combines first-hit reachability-based reward shaping with cumulative-cost budget feedback to achieve safety. We integrate this algorithm into reward-only diffusion policy training to learn safe behaviors.  

      \item We introduce RADPO, a novel algorithm that combines discounted first-hit reachability with cumulative-cost budget feedback and integrates these signals into online diffusion policy optimization through value-weighted denoising regression.

      %\item We provide theoretical results for our algorithm relating accumulated reachability to discounted cumulative cost and repeated hazard exposure. We also analyze improvement in mean shaped-critic value over reweighted candidates and the safety-multiplier controller's response to observed budget violations.

      \item We provide theoretical results for the discounted reachability value, including its first-hit interpretation and bounds relating accumulated reachability to discounted cumulative cost and repeated cost exposure. We further establish finite-candidate mirror-descent guarantees and analyze the safety-multiplier controller's response to observed budget violations.
      
      \item We evaluate RADPO on ten continuous-control safety tasks,
      showing competitive reward--cost trade-offs and substantial
      reductions in constraint violations on several tasks relative
      to the compared baselines, complemented by design ablations.
  \end{itemize}
% buraya belki bi cümle
% ============================================================
% RELATED WORK
% ============================================================

\section{Related Work}

\paragraph{Constrained reinforcement learning.}
Safe reinforcement learning is commonly formulated using constrained Markov decision processes (CMDPs), where expected return is maximized subject to cumulative-cost constraints. Some methods~\citep{achiam2017constrained,yang2020projection,zhang2020first, liu2022constrained, milosevic2025embedding} enforce these constraints through constrained or trust-region policy updates, while primal-dual approaches~\citep{tessler2018reward,stooke2020responsive, wu2024offpolicy} adapt Lagrange multipliers according to observed constraint violations. Beyond changing the policy update itself, other works modify the learned safety signal used by the optimizer; for example, USPC~\citep{demiray2026uspc} replaces standard cost estimates with uncertainty-aware conservative safety surrogates. RADPO follows a similar approach, but instead introduces a distinct predictive reachability value alongside the cumulative-cost signal: episodic costs adapt the safety multiplier, while reachability shapes the reward used for policy improvement.
\vspace{-3mm}

\paragraph{Reachability-based safety and control.}
Reachability analysis attempts to determine whether target or unsafe sets can be reached under a dynamical system. Classical Hamilton-Jacobi (HJ) methods compute backward reachable or viable sets from known dynamics~\citep{mitchell2005time,bansal2017hamilton}, with subsequent work connecting reachability analysis to learning-based control and reinforcement learning~\citep{fisac2018general,akametalu2018minimum,li2025verifiable}. More recently, $\lambda$-Reachability~\citep{chen2026lambda} introduces geometric-horizon Bellman targets that interpolate between discounted local updates and long-horizon max-over-trajectory safety values. In model-free safe RL, RCRL~\citep{yu2022reachability} learns a reachability-based safety value to characterize feasible states, while RESPO~\citep{ganai2023iterative} extends this idea to stochastic systems using a learned estimate of future constraint violation. RADPO instead targets a policy-conditional discounted first-hit cost value, assigning greater weight to violations occurring sooner. This predictive signal shapes the reward, while cumulative episodic cost is handled separately through budget feedback.
\vspace{-3mm}

\paragraph{Online diffusion policy optimization.}
Several recent methods optimize diffusion policies using online reinforcement learning. DIPO~\citep{yang2023dipo} and Q-Score
Matching~\citep{psenka2024qsm} use critic gradients to improve
actions or policy scores, respectively. Other approaches optimize
a Q-weighted variational objective~\citep{ding2024qvpo},
backpropagate critic gradients through the diffusion sampling
process~\citep{wang2024dacer}, or apply PPO-style policy-gradient
optimization to the denoising process
~\citep{ren2024dppo,zou2026d2ppo,liu2026diffusion}. Most closely related to RADPO, DPMD~\citep{ma2025efficient} derives diffusion-policy improvement from policy mirror descent using value-weighted denoising regression, avoiding both critic action gradients and differentiation through the reverse sampler. RADPO builds on this policy-improvement mechanism but evaluates candidate actions using a reward critic learned from rewards shaped by predictive reachability and an adaptive safety multiplier.
\vspace{-3mm}

\paragraph{Constrained and safety-aware diffusion policies.}
Safety and constraints have also been incorporated into diffusion-based decision making. Offline methods use cost-conditioned or feasibility-guided trajectory generation~\citep{lin2024trebi,zheng2024safe}, while barrier- and constraint-guided planners incorporate safety directly into diffusion sampling or denoising~\citep{xiao2025safediffuser,mizuta2024cobl,zhang2025constrained}. In the online setting, ALGD~\citep{cheng2026algd} uses an augmented-Lagrangian energy to guide diffusion-policy optimization under safety costs. Beyond environmental safety, diffusion policies have also been subject to behavior- or trust-region constraints that limit deviation during offline or online optimization~\citep{gao2025behavior,le2026trustregion,simsir2026lpds}. RADPO instead combines fully online model-free diffusion-policy learning with a policy-conditional first-hit signal, separate episodic budget feedback, and value-weighted denoising without critic gradients or reverse-sampler differentiation.
% ============================================================
% PRELIMINARIES
% ============================================================
\section{Preliminaries}
  \label{sec:preliminaries}

  \paragraph{Constrained Markov Decision Processes}
  \label{sec:cmdp_prelim}

  We consider a CMDP with states $s\in\mathcal S$, actions
  $a\in\mathcal A$, transition kernel $P$, reward $r$, nonnegative
  cost $c$, and initial-state distribution $\rho_0$. For a policy
  $\pi$, reward discount $\gamma\in[0,1)$, and episode length $T$,
  define
  \begin{equation}
      J_r(\pi)
      =
      \mathbb E_\pi\!\left[\sum_{t=0}^{T-1}\gamma^t r_t\right],
      \qquad
      J_c(\pi)
      =
      \mathbb E_\pi\!\left[\sum_{t=0}^{T-1}c_t\right].
      \label{eq:objectives}
  \end{equation}
  The objective is to maximize $J_r(\pi)$ subject to
  $J_c(\pi)\leq d$, where $d>0$ is the episodic budget.
  A Lagrangian formulation uses
  \(
      \mathcal J(\pi,\lambda)
      =
      J_r(\pi)-\lambda\bigl(J_c(\pi)-d\bigr),
      %\qquad
      \lambda\geq0.
      \label{eq:lagrangian}
  \)
  RADPO instead uses an adaptive safety multiplier to scale predictive
  reward shaping, instead of directly optimizing $\mathcal J(\pi,\lambda)$.

    \vspace{-3mm}
  \paragraph{Diffusion Policies and Policy Mirror Descent}
  \label{sec:dpmd_prelim}

  A diffusion policy $\pi_\theta(a\mid s)$ generates actions through
  iterative denoising \citep{ho2020denoising}. We denote its
  state-conditioned denoising loss by
  $\ell_{\mathrm{diff}}(\theta;s,a)$. Let $Q(s,a)$ denote the action-value function used for policy
improvement.
  For a reference policy $\pi_{\mathrm{old}}$, KL-regularized
  \(
      \pi_{\mathrm{MD}}(a\mid s)
      =
      \frac{\pi_{\mathrm{old}}(a\mid s)\exp(Q(s,a)/\tau)}{Z(s)},
      \label{eq:mirror_descent_policy}
  \)
  where $\tau>0$ is the temperature and
  $Z(s)=\mathbb E_{a\sim\pi_{\mathrm{old}}}
  [\exp(Q(s,a)/\tau)]$.
  This update motivates value-weighted denoising regression
  \citep{ma2025efficient}:
  \begin{equation}
      \mathcal L_{\mathrm{MD}}(\theta)
      =
      \mathbb E_{\substack{s\sim\mu\\
                          a\sim\pi_{\mathrm{old}}(\cdot\mid s)}}
      \left[
          \frac{\exp(Q(s,a)/\tau)}{Z(s)}
          \ell_{\mathrm{diff}}(\theta;s,a)
      \right],
      \label{eq:dpmd_ideal}
  \end{equation}
  where $\mu$ is the state-sampling distribution. Actions and
  weights are detached during regression, avoiding differentiation
  through critics or the reverse diffusion sampler.
    \vspace{-3mm}
  \paragraph{Discounted Cost Reachability}
  \label{sec:reachability_prelim}

  Define the binary cost indicator $h_t=\1{\{c_t>0\}}$.
  For a fixed policy $\pi$ and discount $\gamma_c\in[0,1)$,
  discounted first-hit reachability satisfies
  \begin{equation}
      V_{\mathrm{reach}}^\pi(s_t)
      =
      \mathbb E_\pi\!\left[
          h_t+(1-h_t)(1-d_t)\gamma_c
          V_{\mathrm{reach}}^\pi(s_{t+1})
          \,\middle|\,s_t
      \right],
      \label{eq:reachability_recursion}
  \end{equation}
  where $d_t$ indicates termination and terminal values
  are zero. This value weights the first future cost by
  $\gamma_c$ raised to its arrival time, with zero contribution
  if no cost occurs before termination. It therefore captures
  both cost likelihood and timing, while remaining distinct
  from $J_c$, which counts cost magnitudes and repeated violations.

% ============================================================
% METHOD
% ============================================================
  \section{Reachability-Aware Diffusion Policy Optimization}
  \label{sec:method}

  RADPO combines predictive safety shaping with value-weighted
  diffusion policy optimization. A learned reachability critic
  predicts future cost exposure, while an adaptive safety multiplier
  controls the shaped penalty strength using realized episodic costs.
  The diffusion actor learns from candidates scored by the resulting
   reward critic. We describe these components below and
  describe some theoretical properties. All proofs for theoretical results are provided in Appendix~\ref{app:theory}. A simplified illustrative diagram is available in Figure~\ref{fig:reachability_motivation}.B and the full pseudocode is available in Appendix~\ref{app:pseudocode}.

  \subsection{Predictive Safety Shaping}
  \label{sec:reachability_shaping}

  Standard cost critics propagate future violations backward through Bellman updates, thus safe states leading to unsafe outcomes can already receive high predicted cost. However, cumulative cost does not explicitly distinguish how soon the next cost may be incurred. RADPO instead shapes the reward using discounted first-hit reachability, which is higher when constraint violation is more likely or expected to occur sooner. Repeatedly applying this penalty along the trajectory can thus discourage behavior that approaches violations, even when the immediate cost is zero.

  We learn an ensemble of bounded reachability critics
  $V_{reach,m}(s)\in[0,1]$, $m=1,\ldots,M$. For a replay transition
  $(s_t,a_t,r_t,c_t,s_{t+1},d_t)$, each head minimizes binary
  cross entropy against the target
  \begin{equation}
      y_{t,m}^{c}
      =
      h_t+(1-h_t)(1-d_t)\gamma_c
      V_{reach,m}^{-}(s_{t+1}),
      \label{eq:reachability_target}
  \end{equation}
  where $V_{reach,m}^{-}$ denotes its target network and
  $h_t=\1{\{c_t>0\}}$.
  The following proposition describes the exact fixed-policy
  value counterpart of  this update.

\begin{proposition}[First-hit interpretation]
  \label{prop:stopping_time}
  Fix a stationary policy $\pi$ and $0<\gamma_c<1$.
  Let
  \[
      \tau_H=\inf\{k\geq0:h_k=1\},
      \qquad \inf\varnothing:=\infty,
  \]
  and let $\tau_{\mathrm{end}}\in\mathbb N\cup\{\infty\}$
  denote the number of transitions before termination,
  with $\tau_{\mathrm{end}}=\infty$ for a nonterminating trajectory.
  Then the unique bounded fixed point of
  \Eqref{eq:reachability_recursion} satisfies
  $0\leq V_{\mathrm{reach}}^\pi(s)\leq1$ and is
  \begin{equation}
      V_{\mathrm{reach}}^\pi(s)
      =
      \mathbb E_s^\pi\!\left[
          \gamma_c^{\tau_H}
          \1{\{\tau_H<\tau_{\mathrm{end}}\}}
      \right].
      \label{eq:reachability_stopping_time}
  \end{equation}
  %where $\gamma_c^\infty:=0$.
  \end{proposition}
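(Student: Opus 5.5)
The plan is to treat \Eqref{eq:reachability_recursion} as the fixed-point equation of a Bellman-type operator. On bounded functions $V$, define
\[
(\mathcal T^\pi V)(s)=\mathbb E^\pi_s\!\left[h_0+(1-h_0)(1-d_0)\gamma_c V(s_1)\right].
\]
The first step is to show that $\mathcal T^\pi$ is a $\gamma_c$-contraction in the sup norm. Since $0\le(1-h_0)(1-d_0)\le1$, we have $|\mathcal T^\pi V-\mathcal T^\pi W|\le\gamma_c\|V-W\|_\infty$. Banach's fixed-point theorem then gives a unique bounded fixed point $V^\pi_{\mathrm{reach}}$.

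The bounds $0\le V^\pi_{\mathrm{reach}}\le1$ follow because $\mathcal T^\pi$ maps the closed set $\{V:0\le V\le1\}$ into itself. Indeed, if $0\le V\le1$, then $\mathcal T^\pi V\in[0,\,h_0+(1-h_0)\gamma_c]\subseteq[0,1]$. Iterating from $V\equiv0$ keeps every iterate in this set, so the limit lies in it too.

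Next, I would show that the candidate $G(s):=\mathbb E^\pi_s[\gamma_c^{\tau_H}\1\{\tau_H<\tau_{\mathrm{end}}\}]$ is bounded and satisfies the recursion; uniqueness then identifies it with $V^\pi_{\mathrm{reach}}$. Here $\gamma_c^\infty:=0$, and the indicator already kills that case. Boundedness in $[0,1]$ is immediate.

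For the recursion, I would condition on the first transition and use the Markov property under the stationary policy. There are three cases.
\begin{itemize}
\item If $h_0=1$, then $\tau_H=0<\tau_{\mathrm{end}}$, because the cost at step $0$ is incurred on the transition before termination. The integrand is therefore $1$.
\item If $h_0=0$ and $d_0=1$, then $\tau_{\mathrm{end}}=1$ and $\tau_H\ge1$, so the integrand is $0$.
\item If $h_0=0$ and $d_0=0$, then on the shifted trajectory from $s_1$ we have $\tau_H=1+\tau_H'$ and $\tau_{\mathrm{end}}=1+\tau_{\mathrm{end}}'$. Hence $\gamma_c^{\tau_H}\1\{\tau_H<\tau_{\mathrm{end}}\}=\gamma_c\,\gamma_c^{\tau_H'}\1\{\tau_H'<\tau_{\mathrm{end}}'\}$.
\end{itemize}
Taking the conditional expectation given $(s_0,a_0,c_0,d_0,s_1)$ in the third case yields $\gamma_c G(s_1)$ by the Markov property. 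Combining the three cases gives $G=\mathcal T^\pi G$.

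The main obstacle is justifying this Markov/shift step rigorously. This includes measurability of the stopping times and the handling of nonterminating trajectories with $\tau_H=\infty$. The convention $h_k$, $d_k$ must also be the one under which $h_t$ and $d_t$ are attached to the same transition $t$.

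If the shift argument gets delicate, an alternative is a truncation argument. Write $G=\sum_{k\ge0}\gamma_c^k\,\Pr_s(\tau_H=k<\tau_{\mathrm{end}})$ as a convergent series. Then show by induction that $(\mathcal T^\pi)^n 0(s)=\sum_{k<n}\gamma_c^k\Pr_s(\tau_H=k<\tau_{\mathrm{end}})$; the identity for the increments follows from the same one-step decomposition applied to events of finite length. Letting $n\to\infty$ and using contraction convergence then gives $V^\pi_{\mathrm{reach}}=G$.
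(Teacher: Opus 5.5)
Your proof is correct, but your main route differs from the paper's. The paper never checks that the stopping-time functional solves the recursion. Instead it takes the unique bounded fixed point and unrolls \Eqref{eq:reachability_recursion} $n$ times:
\[
V^\pi_{\mathrm{reach}}(s)=\mathbb E_s^\pi\Bigl[\sum_{k=0}^{n-1}\gamma_c^k h_k\prod_{j<k}(1-h_j)(1-d_j)+\gamma_c^n V^\pi_{\mathrm{reach}}(S_n)\prod_{j<n}(1-h_j)(1-d_j)\Bigr].
\]
It then kills the remainder with the uniform bound $\gamma_c^n\|V^\pi_{\mathrm{reach}}\|_\infty$ and reads the limiting series pathwise: only the term $k=\tau_H$ survives, and only when $\tau_H<\tau_{\mathrm{end}}$.

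You instead guess $G$, check $G=\mathcal T^\pi G$ by first-step analysis, and let uniqueness do the identification. Your version puts all the probabilistic content into one one-step identity, which is direct. The price is that it needs the Markov/shift property for an infinite-horizon path functional, plus measurability of $G$; you correctly flag this as the delicate point. The paper's unrolling only ever conditions on finitely many steps, and it handles nonterminating paths automatically through the uniform remainder bound. Its own cost is a limit interchange, which is harmless here because the terms are nonnegative and dominated by $\gamma_c^k$.

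Your fallback truncation argument, which computes $(\mathcal T^\pi)^n 0$ as the partial sums, is essentially the paper's proof started from zero rather than from the fixed point. Your case analysis uses the same convention the paper implicitly adopts: $\tau_{\mathrm{end}}$ is one plus the first index with $d_j=1$, which matches the factor $\prod_{j<k}(1-d_j)$ in the expansion. The paper also gives a geometric-horizon interpretation, which the statement does not require.
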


  Proposition~\ref{prop:stopping_time} shows that discounted first-hit reachability combines the likelihood and timing of a future constraint violation. A smaller value can reflect either a lower violation probability or a more distant violation. The proof is given in Appendix~\ref{app:stopping_time}.

  The shaping signal uses the online ensemble mean
  $V_{reach}(s)=\frac{1}{M}\sum_{m=1}^{M}V_{reach,m}(s)$:
  \begin{equation}
      \widetilde r_t
      =
      r_t-\beta\lambda V_{reach}(s_{t+1}),
      \label{eq:shaped_reward}
  \end{equation}
  where $\beta>0$ is a fixed
  shaping coefficient, and $\lambda\geq0$ is the adaptive
  safety multiplier (Sec.~\ref{sec:dual_update}). Twin reward critics are trained against
  \begin{equation}
      y_t^r
      =
      \widetilde r_t
      +(1-d_t)\gamma Q_r^{-}(s_{t+1},a_{t+1}),
      \label{eq:reward_backup}
  \end{equation}
  where $Q_r^{-}(s,a)=\min_{j\in\{1,2\}}Q_{r,j}^{-}(s,a)$
  and $a_{t+1}$ denotes the bootstrap action.
  The actor uses no separate cost term in its reweighting logits; safety is incorporated through the shaped reward.
  
  \paragraph{Relation to cumulative cost.}
To understand how reachability shaping relates to the original safety
objective, consider exact evaluation under a fixed policy $\pi$.
Define accumulated reachability and discounted cumulative cost:
  \begin{align}
      G_{\gamma,\gamma_c}^{\pi}(s,a)
      &=
      \mathbb E_{s,a}^{\pi}\!\left[
          \sum_{k\geq0}\gamma^k
          V_{\mathrm{reach}}^\pi(S_{t+k+1})
      \right],
      %\label{eq:accumulated_reachability}\\
        \:Q_c^{\pi,\delta}(s,a)
      =
      \mathbb E_{s,a}^{\pi}\!\left[
          \sum_{k\geq0}\delta^k c_{t+k}
      \right],
      \: 0<\delta<1,
      \label{eq:comparison_cost_q}
  \end{align}
  with zero continuation after termination.

  \begin{theorem}[Conservative cumulative-cost surrogate]
  \label{thm:conservative_surrogate}
  Assume $0\leq c_t\leq c_{\max}h_t$, with $c_{\max}>0$,
  and $\delta\leq\gamma<1$. Then
  \begin{equation}
      Q_c^{\pi,\delta}(s,a)
      \leq
      \mathbb E[c_t\mid s,a]
      +c_{\max}\delta G_{\gamma,\gamma_c}^{\pi}(s,a).
      \label{eq:conservative_surrogate}
  \end{equation}
  If $c_t=0$ almost surely for the initial $(s,a)$, then
  \begin{equation}
      G_{\gamma,\gamma_c}^{\pi}(s,a)
      \geq
      \frac{Q_c^{\pi,\delta}(s,a)}{c_{\max}\delta}.
      \label{eq:conservative_surrogate_safe}
  \end{equation}
  \end{theorem}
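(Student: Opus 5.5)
The plan is to bound each future cost term by a reachability value and then sum. Split off the immediate term:
\[
Q_c^{\pi,\delta}(s,a)=\mathbb E[c_t\mid s,a]+\sum_{k\geq1}\delta^k\,\mathbb E_{s,a}^\pi[c_{t+k}],
\]
with $c_{t+k}=0$ after termination. By the assumption $0\leq c\leq c_{\max}h$, each term satisfies $\mathbb E[c_{t+k}]\leq c_{\max}\,\mathbb E[h_{t+k}\1\{t+k<\tau_{\mathrm{end}}\}]$.

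The key pointwise step is the inequality
\[
\mathbb E\bigl[h_{t+k}\1\{\text{alive at }t+k\}\mid S_{t+k}\bigr]\leq V_{\mathrm{reach}}^\pi(S_{t+k}).
\]
I will get it from Proposition~\ref{prop:stopping_time}, or directly from \Eqref{eq:reachability_recursion}. The recursion reads $V=\mathbb E[h+(1-h)(1-d)\gamma_c V']$, and the second term is nonnegative because $V\geq0$. Hence $\mathbb E[h_{t+k}\mid S_{t+k}]\leq V_{\mathrm{reach}}^\pi(S_{t+k})$ on the event that $S_{t+k}$ is a live state. Terminal states have value zero and cost zero, so they are consistent with this. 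Stationarity of $\pi$ and the Markov property justify conditioning on $S_{t+k}$ alone.

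Re-index with $j=k-1$, so $k=j+1$ and $\delta^k=\delta\cdot\delta^{j}$. This gives
\[
\sum_{k\geq1}\delta^k\mathbb E[c_{t+k}]\leq c_{\max}\,\delta\sum_{j\geq0}\delta^{j}\,\mathbb E_{s,a}^\pi\bigl[V_{\mathrm{reach}}^\pi(S_{t+j+1})\bigr].
\]
Next, use $0<\delta\leq\gamma$ and $V_{\mathrm{reach}}^\pi\geq0$ to get $\delta^j V\leq\gamma^j V$ termwise. Tonelli's theorem then allows the exchange of sum and expectation, since all terms are nonnegative. The right-hand side becomes $c_{\max}\delta\,G_{\gamma,\gamma_c}^\pi(s,a)$, which is \eqref{eq:conservative_surrogate}.

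The second claim follows immediately. If $c_t=0$ a.s. for the initial pair, the immediate term vanishes, and we can divide by $c_{\max}\delta>0$ to obtain \eqref{eq:conservative_surrogate_safe}.

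The main obstacle is bookkeeping around termination. The statement gives $G$ zero continuation after termination, so I must check that $V_{\mathrm{reach}}^\pi(S_{t+j+1})$ is taken as $0$ for post-terminal indices. I must also check that the cost at a step with $d=1$ is still bounded, which it is because the recursion includes the $h$ term before the $(1-d)$ factor. I will state explicitly that "alive at time $t+k$" means $t+k<\tau_{\mathrm{end}}$ and that the costs and values in both sums are indexed over the same live steps. Once that is done, the argument is a direct termwise comparison.
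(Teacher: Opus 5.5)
Your proposal is correct and matches the paper's proof step for step. Both split off the immediate cost and bound $c_{t+k}\leq c_{\max}h_{t+k}$. Both then use nonnegativity of the continuation term in the reachability recursion, together with the tower property, to get $\mathbb E[h_{t+k}\mid s,a]\leq\mathbb E[V_{\mathrm{reach}}^\pi(S_{t+k})\mid s,a]$ for $k\geq1$. Finally, $\delta^k\leq\delta\gamma^{k-1}$ sums this into $c_{\max}\delta\,G_{\gamma,\gamma_c}^\pi(s,a)$, and the second claim follows by dropping the immediate term and dividing. Your explicit handling of termination and the Tonelli exchange only spells out details the paper leaves implicit.
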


  Theorem~\ref{thm:conservative_surrogate} shows that, after separating the immediate cost, the accumulated reachability signal provides a conservative surrogate for future discounted cumulative cost: a large future cost must be accompanied by a correspondingly large accumulated reachability penalty, up to a fixed factor. When \(c_t=0\), we obtain a bound relating $G_{\gamma,\gamma_c}^{\pi}$ with $Q_c^{\pi,\delta}$. Consequently, reachability can also assign predictive penalties to safe predecessor states before the cost is incurred. The proof is in Appendix~\ref{app:conservative_surrogate}.

  Each reachability value depends only on the first subsequent cost, but RADPO applies this signal repeatedly along the trajectory. To show how this relates to repeated cost exposure, let
  \begin{equation}
      D_h^{\pi,+}(s,a)
      =
      \mathbb E_{s,a}^{\pi}\!\left[
          \sum_{k\geq1}\gamma^{k-1}h_{t+k}
      \right].
      \label{eq:repeated_cost_exposure}
  \end{equation}

  \begin{corollary}[Repeated cost exposure]
  \label{cor:repeated_costs}
  If $0\leq\gamma_c<\gamma<1$, then
  \begin{equation}
      D_h^{\pi,+}(s,a)
      \leq
      G_{\gamma,\gamma_c}^{\pi}(s,a)
      \leq
      \frac{D_h^{\pi,+}(s,a)}{1-\gamma_c/\gamma}.
      \label{eq:repeated_cost_bound}
  \end{equation}
  \end{corollary}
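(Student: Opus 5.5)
The plan is to sandwich each reachability term $V_{\mathrm{reach}}^\pi(S_{t+k+1})$ between two trajectory functionals of the cost indicators, using the first-hit representation of Proposition~\ref{prop:stopping_time}. Then sum with weights $\gamma^k$. Throughout, all summands are nonnegative, so Tonelli's theorem justifies every interchange of sums and expectations. The Markov property under the stationary policy $\pi$ lets us write $V_{\mathrm{reach}}^\pi(S_{t+k+1})$ as a conditional expectation of future indicators given the history up to time $t+k+1$. Indicators after termination are taken to be zero, consistent with the zero-continuation convention.

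\emph{Lower bound.} Fix a state $s$ that is reached before termination. The recursion \Eqref{eq:reachability_recursion} gives
\[
V_{\mathrm{reach}}^\pi(s)=\mathbb E_\pi[h\mid s]+\mathbb E_\pi\!\left[(1-h)(1-d)\gamma_c V_{\mathrm{reach}}^\pi(s')\mid s\right].
\]
The second term is nonnegative, so $V_{\mathrm{reach}}^\pi(s)\ge\mathbb E_\pi[h\mid s]$. Equivalently, in the representation \Eqref{eq:reachability_stopping_time}, the event $\{\tau_H=0\}$ already contributes $\gamma_c^0=1$. Applying this at $S_{t+k+1}$ and using the tower property gives
\[
\mathbb E_{s,a}^\pi\!\left[V_{\mathrm{reach}}^\pi(S_{t+k+1})\right]\ge\mathbb E_{s,a}^\pi\!\left[h_{t+k+1}\right].
\]
Multiplying by $\gamma^k$ and summing over $k\ge0$ yields $G_{\gamma,\gamma_c}^\pi\ge D_h^{\pi,+}$.

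\emph{Upper bound.} Pathwise, the first-hit weight is dominated by the sum over all hits:
\[
\gamma_c^{\tau_H}\1\{\tau_H<\tau_{\mathrm{end}}\}\le\sum_{j\ge0}\gamma_c^{j}h_j .
\]
The term $j=\tau_H$ appears on the right, and all other terms are nonnegative. Combining this with Proposition~\ref{prop:stopping_time} and the Markov property,
\[
V_{\mathrm{reach}}^\pi(S_{t+k+1})\le\mathbb E\!\left[\sum_{j\ge0}\gamma_c^j h_{t+k+1+j}\,\middle|\,\mathcal F_{t+k+1}\right].
\]
Substituting into $G$ gives
\[
G_{\gamma,\gamma_c}^\pi(s,a)\le\mathbb E_{s,a}^\pi\!\left[\sum_{k\ge0}\sum_{j\ge0}\gamma^k\gamma_c^j h_{t+1+k+j}\right].
\]
Now reindex by $m=k+j$. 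The coefficient of $h_{t+1+m}$ becomes
\[
\sum_{j=0}^m\gamma^{m-j}\gamma_c^j=\gamma^m\sum_{j=0}^m(\gamma_c/\gamma)^j\le\frac{\gamma^m}{1-\gamma_c/\gamma}.
\]
The geometric series converges because $\gamma_c<\gamma$. Since $\gamma^m h_{t+1+m}=\gamma^{(m+1)-1}h_{t+(m+1)}$, summing over $m$ gives exactly $D_h^{\pi,+}(s,a)/(1-\gamma_c/\gamma)$.

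The main obstacle is bookkeeping rather than analysis. First, the conditional identity for $V_{\mathrm{reach}}^\pi(S_{t+k+1})$ must be justified on the event that the trajectory is still running at time $t+k+1$; on the complementary event both sides are zero. Second, the first-hit indicator must truncate at termination, consistently with how $D_h^{\pi,+}$ treats post-termination steps. The double-sum reindexing is routine once nonnegativity permits the interchange.
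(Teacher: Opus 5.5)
Your proof is correct and follows the paper's argument essentially step for step. The lower bound comes from the nonnegative continuation term in the reachability recursion plus the tower property. The upper bound comes from dominating the first-hit weight by the full discounted sum $\sum_{j\ge0}\gamma_c^j h_{n+j}$, reindexing the double sum, and bounding the geometric factor by $1/(1-\gamma_c/\gamma)$; your explicit treatment of Tonelli, the Markov property, and post-termination indicators is bookkeeping the paper leaves implicit.
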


  Corollary~\ref{cor:repeated_costs} shows that accumulated reachability is bounded above and below by discounted repeated cost exposure, up to a fixed scaling factor. Thus, although each reachability value is first-hit, repeatedly applying it preserves sensitivity to multiple future violations. Its proof is in Appendix~\ref{app:repeated_costs}.

  \subsection{Diffusion Policy Improvement by Candidate Reweighting}
  \label{sec:dpmd_update}

  Critic regression uses replay state-action pairs, whereas
  actor targets are generated at replay next states $s'$.
  For each $s'$, we sample $K$ candidates
  $\bar a_i\sim\pi_\theta(\cdot\mid s')$ and perturb them
  before scoring, i.e.
  \(
      a_i=\Pi_{\mathcal A}(\bar a_i+\epsilon_i),
      %\qquad
      \epsilon_i\sim\mathcal N(0,\sigma_{\mathrm{exp}}^2I),
      \label{eq:candidate_perturbation}
  \)
  where $\Pi_{\mathcal A}$ clips actions to their bounds.
  The exploration scale is
  $\sigma_{\mathrm{exp}}=\sigma_0\exp(\xi)$, with fixed
  $\sigma_0$ and a learned log-scale $\xi$ adapted toward
  a target Gaussian entropy.

  Candidates receive scores
  $q_i=\min_{j\in\{1,2\}}Q_{r,j}(s',a_i)$.
  Let $\bar q=\frac{1}{K}\sum_i q_i$ and
  $\sigma=\max(\sigma_A,10^{-6})$, where $\sigma_A$
  is the advantage-normalization scale. We center,
  normalize, and clip the scores:
  \(
      z_i
      =
      \operatorname{clip}\!\left(
          \frac{q_i-\bar q}{\sigma},-3,3
      \right),
      %\qquad
      \widehat A_i=z_i/\tau,
      \label{eq:normalized_advantage}
  \)
  where $\tau>0$ controls the reweighting strength.
  The regression weights are
  \(
      w_i
      =
      K\frac{\exp(\widehat A_i)}
             {\sum_j\exp(\widehat A_j)},
  \)    %\qquad
  \(
      \frac{1}{K}\sum_i w_i=1.
      \label{eq:dpmd_weights}
  \)
  The actor minimizes the weighted denoising objective
  \begin{equation}
      \mathcal L_\pi(\theta)
      =
      \mathbb E\!\left[
          \frac{1}{K}\sum_{i=1}^{K}
          w_i\,\ell_{\mathrm{diff}}(\theta;s',a_i)
      \right],
      \label{eq:weighted_diffusion_loss}
  \end{equation}
  where the expectation covers replay successor states
  and candidate generation. Candidates and weights are
  detached, thus the update requires neither action gradients
  through critics nor differentiation through the reverse
  diffusion sampler.

  \paragraph{Finite-candidate improvement.}
  Building on the policy-mirror-descent interpretation underlying DPMD~\citep{ma2025efficient}, we describe the finite-candidate reweighting used by RADPO. Although the subsequent diffusion regression is approximate, candidate reweighting step itself admits a mirror-descent interpretation over the sampled actions.

  \begin{theorem}[Finite-candidate mirror descent]
  \label{thm:candidate_improvement}
  For a fixed candidate set, let $p_i=w_i/K$,
  $u_i=1/K$, and let $\Delta_K$ denote the probability simplex.
  Then
  \begin{equation}
      p
      =
      \arg\max_{v\in\Delta_K}
      \left\{
          \sum_i v_i z_i-\tau D_{\mathrm{KL}}(v\|u)
      \right\}.
      \label{eq:empirical_mirror_descent}
  \end{equation}
  Moreover,
  \begin{equation}
      \sum_i p_iq_i-\frac{1}{K}\sum_i q_i
      \geq
      \tau\sigma
      \left[
          D_{\mathrm{KL}}(p\|u)+D_{\mathrm{KL}}(u\|p)
      \right]
      \geq0.
      \label{eq:raw_q_improvement}
  \end{equation}
  \end{theorem}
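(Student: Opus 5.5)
The plan has two parts: the variational identity \Eqref{eq:empirical_mirror_descent} follows from the Gibbs variational principle on the simplex, and the improvement bound \eqref{eq:raw_q_improvement} follows by writing the symmetric KL as a linear functional of the scores and then controlling the effect of clipping.

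\textbf{Step 1 (variational form).} Write $p_i=w_i/K=\exp(z_i/\tau)/\sum_j\exp(z_j/\tau)$, which holds since $\widehat A_i=z_i/\tau$. Set $F(v)=\sum_i v_iz_i-\tau D_{\mathrm{KL}}(v\|u)$ and $\log Z=\log\frac1K\sum_j e^{z_j/\tau}$. Then for every $v\in\Delta_K$,
\[
F(v)=\tau\log Z-\tau D_{\mathrm{KL}}(v\|p).
\]
This is verified by expanding $\log(v_i/u_i)=\log(v_i/p_i)+z_i/\tau-\log Z$. Since $D_{\mathrm{KL}}(v\|p)\ge0$ with equality iff $v=p$, the maximizer is unique and equals $p$. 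Equivalently, one could use strict concavity of $F$ and the KKT conditions with a multiplier for $\sum_iv_i=1$.

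\textbf{Step 2 (symmetric KL as a linear functional).} From $\log(p_i/u_i)=z_i/\tau-\log Z$ we get
\[
D_{\mathrm{KL}}(p\|u)+D_{\mathrm{KL}}(u\|p)=\sum_i(p_i-u_i)\log\frac{p_i}{u_i}=\frac1\tau\sum_i(p_i-u_i)z_i.
\]
The constant $\log Z$ drops out because $\sum_i(p_i-u_i)=0$. Hence the right side of \eqref{eq:raw_q_improvement} equals $\sigma\sum_i(p_i-u_i)z_i$. The left side is $\sum_i(p_i-u_i)q_i=\sigma\sum_i(p_i-u_i)x_i$ with $x_i=(q_i-\bar q)/\sigma$, again using $\sum_i(p_i-u_i)=0$ to subtract $\bar q$. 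It therefore suffices to show
\[
\sum_i(p_i-u_i)(x_i-z_i)\ge0.
\]
Without clipping this holds with equality.

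\textbf{Step 3 (clipping; the main obstacle).} I would check the sign of each term separately, since $z_i=\operatorname{clip}(x_i,-3,3)$ breaks the exact identity.
\begin{itemize}
\item If $|x_i|\le3$, then $x_i=z_i$ and the term vanishes.
\item If $x_i>3$, then $x_i-z_i>0$ and $z_i=3$. This is the largest attainable clipped score, so $e^{z_i/\tau}\ge\frac1K\sum_je^{z_j/\tau}$, i.e.\ $p_i\ge u_i$, and the term is $\ge0$.
\item If $x_i<-3$, then $x_i-z_i<0$ and $z_i=-3$ is minimal, so $p_i\le u_i$ and the term is again $\ge0$.
\end{itemize}
Summing gives the first inequality in \eqref{eq:raw_q_improvement}. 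The second inequality, $\ge0$, holds because both KL terms are nonnegative and $\tau,\sigma>0$ (recall $\sigma\ge10^{-6}$).
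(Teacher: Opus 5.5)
Your proof is correct. Steps 1 and 2 coincide with the paper's Gibbs variational lemma: the paper uses the same identity $D_{\mathrm{KL}}(v\|p)=D_{\mathrm{KL}}(v\|u)-\frac1\tau\langle v,z\rangle+\log Z_u$, and the same symmetric-KL identity in which $\log Z$ cancels because $\sum_i(p_i-u_i)=0$.

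Step 3, the clipping step, is where your route differs. The paper writes $q_i=\sigma z_i+R_i$ with $R_i=q_i-\sigma z_i$ and makes two observations:
\begin{itemize}
\item the map $q\mapsto q-\sigma\,\mathrm{clip}((q-\bar q)/\sigma,-3,3)$ is non-decreasing;
\item $p_i$ is non-decreasing in $q_i$.
\end{itemize}
It then concludes with the pairwise covariance (Chebyshev-type) identity
\[
\sum_i(p_i-u_i)R_i=\frac1K\sum_{i<j}(p_i-p_j)(R_i-R_j)\ge0 .
\]
Since $\sigma(x_i-z_i)=R_i-\bar q$, your $\sum_i(p_i-u_i)(x_i-z_i)$ is exactly $\frac1\sigma\sum_i(p_i-u_i)R_i$. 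You bound the same quantity, but summand by summand. The residual is nonzero only for saturated candidates, and saturation at $+3$ or $-3$ places $z_i$ at the maximum or minimum of all clipped scores. That fixes the sign of $p_i-u_i$ to agree with the sign of $x_i-z_i$.

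Your argument is more elementary and gives a stronger, termwise conclusion. The paper's comonotonicity argument is more robust. It needs only that the residual map is non-decreasing and that the weights preserve order. It would therefore still go through for any non-decreasing, 1-Lipschitz squashing of the normalized scores, such as a smooth $3\tanh(x/3)$. Your sign argument fails there: the residual is then nonzero for unsaturated candidates, whose $p_i-u_i$ can have either sign.
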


  Theorem~\ref{thm:candidate_improvement} shows that the weights implement a KL-regularized mirror-descent update over the sampled candidates, starting from uniform weights. Moreover, despite normalization and clipping of the scores used to compute the weights, reweighted candidates cannot have a lower mean reward-critic score than the original candidate set. Safety affects this improvement through the reward critic, which is trained on the reachability-shaped reward. The proof is given in Appendix~\ref{app:candidate_improvement}.

  \subsection{Adaptive Safety Multiplier}
\label{sec:dual_update}

The reachability signal predicts first-hit cost exposure,
while the safety-multiplier controller uses the original
undiscounted episodic costs, including their magnitudes
and repeated violations. Let $\widehat J_c$ denote a recent-window estimate of
episodic cost constructed from completed training episodes,
as detailed in Appendix~\ref{app:safety_multiplier_details}.
The safety multiplier follows the projected update
\begin{equation}
    \lambda
    \leftarrow
    \Pi_{[0,\lambda_{\max}]}
    \left[
        \lambda+
        \eta_\lambda\frac{\widehat J_c-d}{d}
    \right],
    \label{eq:lambda_update}
\end{equation}
where $\eta_\lambda>0$ is the controller step size.
The multiplier increases when observed cost exceeds the
budget and decreases below budget, subject to projection.

  \paragraph{Budget-tracking property.}
The controller admits a pathwise bound on the average
episodic-cost statistic supplied to it, without requiring
convergence of the actor or critics.

  \begin{proposition}[Average observed-budget tracking]
  \label{prop:budget_tracking}
  Let $\lambda_0\in[0,\lambda_{\max}]$ and let
  $\widehat J_{c,j}$ denote the controller statistic at
  safety-multiplier update $j$. If upper clipping remains
  inactive for the first $N$ updates, then
  \begin{equation}
      \frac{1}{N}\sum_{j=0}^{N-1}\widehat J_{c,j}
      \leq
      d+\frac{d(\lambda_{\max}-\lambda_0)}
                   {\eta_\lambda N}.
      \label{eq:budget_tracking}
  \end{equation}
  If upper clipping remains inactive for all updates, then
  \begin{equation}
      \limsup_{N\to\infty}
      \frac{1}{N}\sum_{j=0}^{N-1}\widehat J_{c,j}
      \leq d.
      \label{eq:budget_tracking_asymptotic}
  \end{equation}
  \end{proposition}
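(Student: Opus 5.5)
The argument is a projected-subgradient telescoping bound. The key observation is that when upper clipping is inactive, the projection onto $[0,\lambda_{\max}]$ can only increase the iterate. It acts only through the lower bound $0$. Writing $g_j=(\widehat J_{c,j}-d)/d$, the update \eqref{eq:lambda_update} then reads $\lambda_{j+1}=\max\{0,\lambda_j+\eta_\lambda g_j\}$, so in every step
\[
\lambda_{j+1}\geq\lambda_j+\eta_\lambda g_j .
\]
The first step is to make precise what ``upper clipping inactive'' means. I take it to mean $\lambda_j+\eta_\lambda g_j\leq\lambda_{\max}$ for $j=0,\dots,N-1$, so the upper branch of the projection never binds. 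From this the displayed inequality follows in both remaining cases: either no projection occurs (equality), or the projection onto $0$ raises the value.

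The second step is to sum this inequality over $j=0,\dots,N-1$. This gives
\[
\lambda_N-\lambda_0\geq\eta_\lambda\sum_{j=0}^{N-1}g_j .
\]
Since every iterate lies in $[0,\lambda_{\max}]$, we have $\lambda_N\leq\lambda_{\max}$. Hence
\[
\sum_{j=0}^{N-1}\frac{\widehat J_{c,j}-d}{d}\leq\frac{\lambda_{\max}-\lambda_0}{\eta_\lambda}.
\]
Multiplying by $d/N$ and rearranging yields \eqref{eq:budget_tracking}. In fact the proof gives the slightly sharper $\lambda_N$ in place of $\lambda_{\max}$, and I will note this in passing.

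For \eqref{eq:budget_tracking_asymptotic}, the finite bound holds for every $N$ when clipping is never active. The additive term $d(\lambda_{\max}-\lambda_0)/(\eta_\lambda N)$ tends to $0$, so taking $\limsup$ gives the claim. No assumption on the actor, the critics or the statistic $\widehat J_{c,j}$ is used, which matches the pathwise nature of the statement.

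The only delicate point is the first step: justifying the one-sided inequality from the projection, and fixing the interpretation of ``upper clipping inactive''. If the authors instead mean that $\lambda_j<\lambda_{\max}$ at each iterate, I would check the same inequality directly. Otherwise everything is a routine telescoping sum.
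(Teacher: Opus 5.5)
Your proposal is correct and follows essentially the same argument as the paper. The paper writes the projection exactly as $\lambda_{j+1}=\lambda_j+\eta_\lambda e_j+\ell_j-u_j$, with nonnegative lower and upper clipping corrections $\ell_j,u_j$; it telescopes, drops $\ell_j\geq0$, uses $\lambda_N\leq\lambda_{\max}$, and sets $u_j=0$, which is your one-sided inequality in explicit bookkeeping form (keeping $u_j$ also yields the paper's capped bound with $\sum_j u_j$, which the statement here does not need).
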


  Proposition~\ref{prop:budget_tracking} shows that, as long as
the upper bound is not active, sustained
average budget violations cannot persist: the average
controller statistic exceeds $d$ by at most an
$\mathcal{O}(1/N)$ term and is asymptotically no greater than
$d$. Note that this result applies only to the statistic supplied to the
controller and does not imply per-policy or per-episode
feasibility. The proof and the corresponding bound when upper
clipping occurs are given in
Appendix~\ref{app:budget_tracking}.

% ============================================================
% EXPERIMENTS
% ============================================================
\section{Experiments}
\label{sec:experiments}

\subsection{Experimental Setup}
\label{sec:experimental_setup}

We evaluate RADPO on ten continuous-control safety tasks in Safety Gymnasium~\citep{ji2024safetygymnasiumunifiedsafereinforcement} against CAL~\citep{wu2024offpolicy}, SAC-Lag, PPO-Lag, RESPO~\citep{ganai2023iterative}, ALGD~\citep{cheng2026algd}\footnote{We compare against ALGD using the authors’ released code, which includes Lagrangian energy itself as an auxiliary loss term not described in the paper. Removing it prevents useful learning in our reproduction, therefore we report the released implementation and note this discrepancy.} and an unconstrained reward-only RADPO ablation. The experimental protocol evaluates six random seeds per configuration. The episodic cost budget is set to $d=25$, except for RESPO, which still enforces a zero-cost threshold by its formulation but is included as a hard-safety baseline. We report the mean and standard deviation of episodic return and episodic cost across seeds. The full list of hyperparameters is available in Appendix~\ref{app:hyperparameters}.

% \subsection{Main Results}
% \label{sec:main_results}

\subsection{Main Results}

Figure~\ref{fig:benchmark_results} shows that RADPO substantially reduces constraint
violations relative to reward-only RADPO while maintaining competitive returns,
especially on the navigation tasks. The reward-only variant attains high returns but generally incurs severe constraint violations. Figure~\ref{fig:pareto} summarizes the corresponding late-training
reward--cost trade-offs. RADPO achieves both higher mean return and lower mean
cost than the evaluated baselines on the two Circle tasks, and the highest mean
return on both Goal tasks. On the
Velocity tasks, RADPO is non-dominated on HalfCheetah, Humanoid,
Swimmer, and Walker2d, while baselines attain better operating points on
Ant and Hopper. Overall, RADPO provides favorable trade-offs across most
evaluated tasks.

\begin{figure*}[t]
    \centering
    \includegraphics[width=\textwidth]{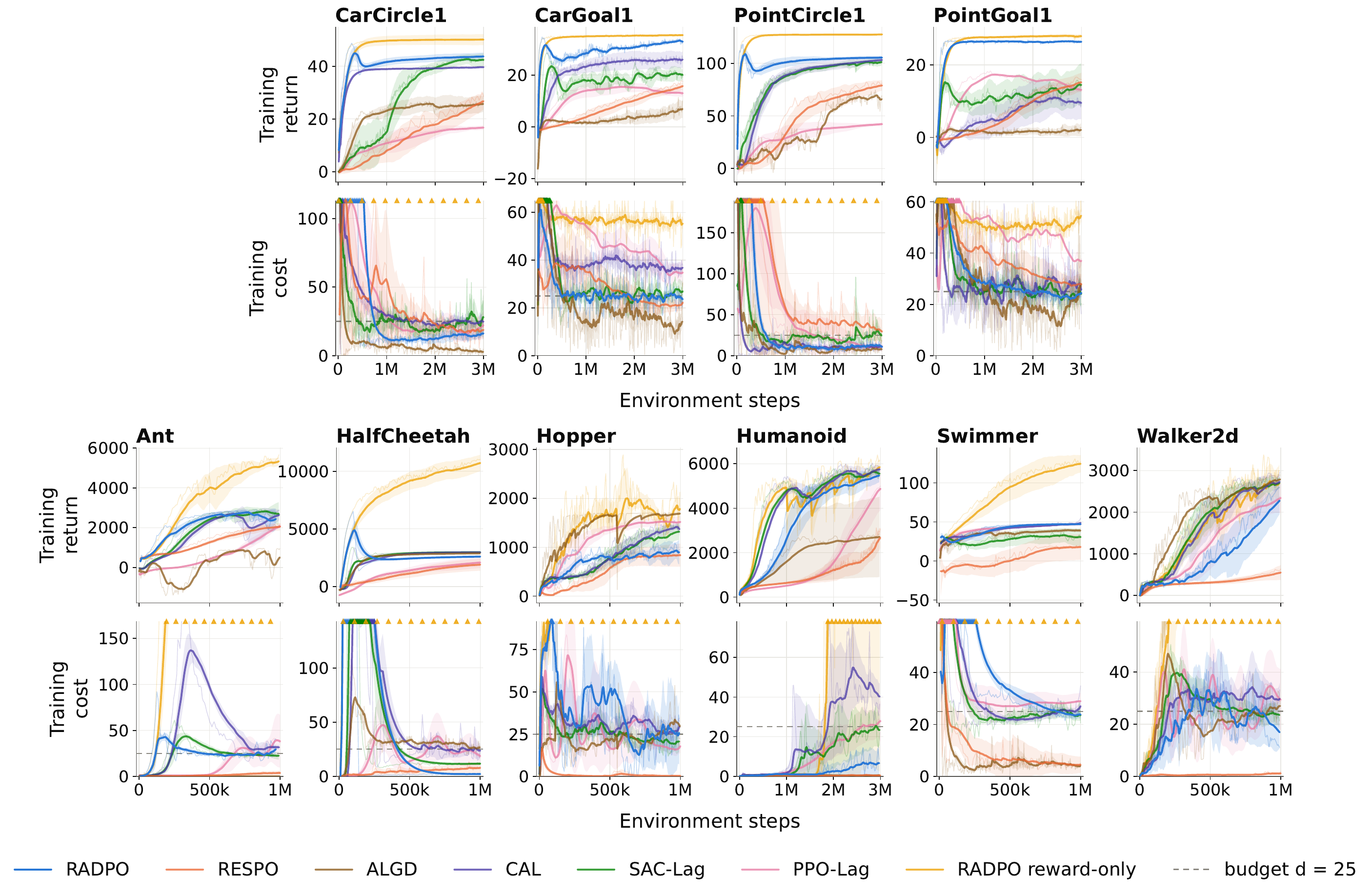}
    \vspace{-8mm}
    \caption{
    \textbf{Training performance across ten continuous-control safety tasks.}
    Episodic return (top rows) and episodic cost (bottom rows) for RADPO and baseline methods. Curves show empirical means across six random seeds; shaded regions denote one standard deviation. Dashed horizontal lines indicate the safety budget $d=25$.
    }
    \label{fig:benchmark_results}
    \vspace{-4mm}
\end{figure*}

% As shown in Figure~\ref{fig:benchmark_results}, RADPO substantially reduces safety violations compared to reward-only optimization while maintaining competitive task performance. This advantage is particularly pronounced in navigation environments, where RADPO couples high cumulative reward with low constraint costs. On velocity-constrained benchmarks, RADPO yields competitive trade-offs between performance and safety, with the exact operating balance varying by environment dynamics. In contrast, the reward-only ablation achieves high returns at the expense of severe constraint violations.

% \subsection{Reward--Cost Trade-offs}
% \label{sec:pareto}

% \begin{figure*}[t]
%     \centering
%     \includegraphics[width=\textwidth]{figures/benchmark_pareto.pdf}
%     \vspace{-8mm}
%     \caption{
%     \textbf{Late-training return versus safety cost.}
%     Markers indicate mean performance across six seeds, computed over the final ten evaluation checkpoints per run; error bars represent one standard deviation across seed averages. Optimal performance lies toward the top-left quadrant (high return, low cost). Solid gray lines trace the empirical Pareto frontier across evaluated method means, and vertical dashed lines denote the budget $d=25$.
%     }
%     \label{fig:pareto}
%     %\vspace{-4mm}
% \end{figure*}

\begin{figure*}[t]
    \centering
    \includegraphics[width=\textwidth]{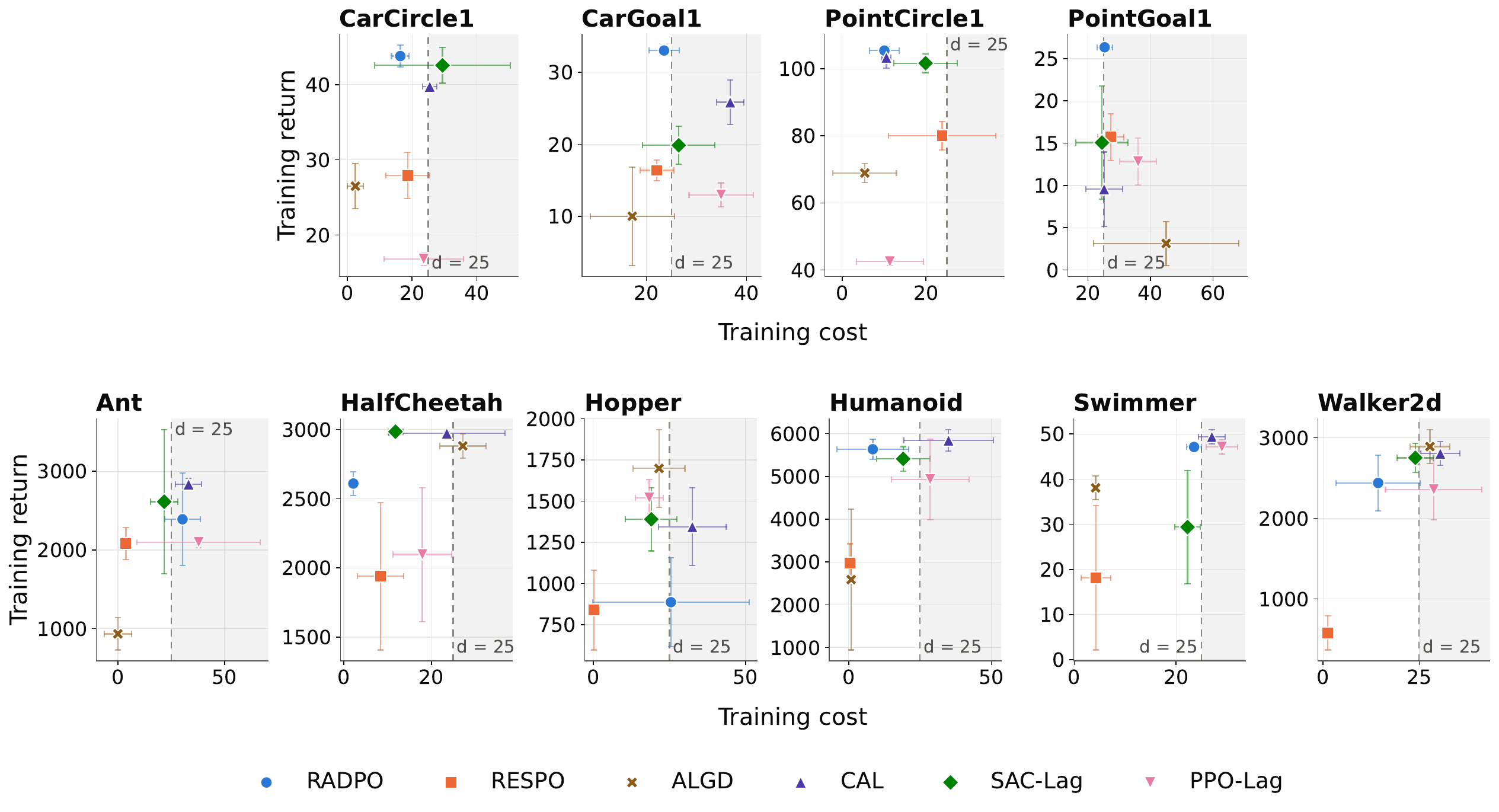}
    \vspace{-8mm}
    \caption{
    \textbf{Late-training return versus safety cost.}
    Markers indicate mean performance across seeds, computed over the final ten logged training points per run. Error bars represent one standard deviation across seed averages. Optimal performance lies toward the top-left quadrant (high return, low cost). Vertical dashed lines denote the budget $d=25$ and the shaded region is infeasible.
    }
    \label{fig:pareto}
    %\vspace{-4mm}
\end{figure*}

% Figure~\ref{fig:pareto} summarizes the trade-off between return and safety during late training, where each seed's metric is averaged across its final ten logged checkpoints. RADPO dominates baseline methods across the Circle suite, securing higher mean returns alongside lower mean costs. It similarly achieves the highest returns on CarGoal1 and PointGoal1, strictly respecting the safety threshold on CarGoal1.

% On velocity-constrained tasks, RADPO consistently establishes the empirical Pareto frontier. Across four of the six domains (HalfCheetah, Humanoid, Swimmer, and Walker2d), it occupies the optimal trade-off surface, driving returns to baseline-topping levels while strictly regulating safety costs. While baselines find more favorable operating regimes on Ant and Hopper, RADPO's Pareto-optimal profile across the majority of locomotion environments highlights its capability to balance high-speed control against rigorous velocity limits.

\subsection{Ablation Studies}
\label{sec:ablations}

We investigate two key design decisions in adapting diffusion policies to safe reinforcement learning: the choice of safety signal and the structural placement of the safety penalty. Using CarGoal1, PointCircle1, and HalfCheetah, we compare RADPO against three ablations: (1)~replacing reachability estimation with a standard discounted cost critic, (2)~applying cost-only reward shaping, and (3)~enforcing a Lagrangian safety penalty directly within candidate action weights. Additional ablation studies for hyperparameter values are available in Appendix~\ref{app:additional_ablations}.

\begin{figure*}[t]
    \centering
    \includegraphics[width=\textwidth]{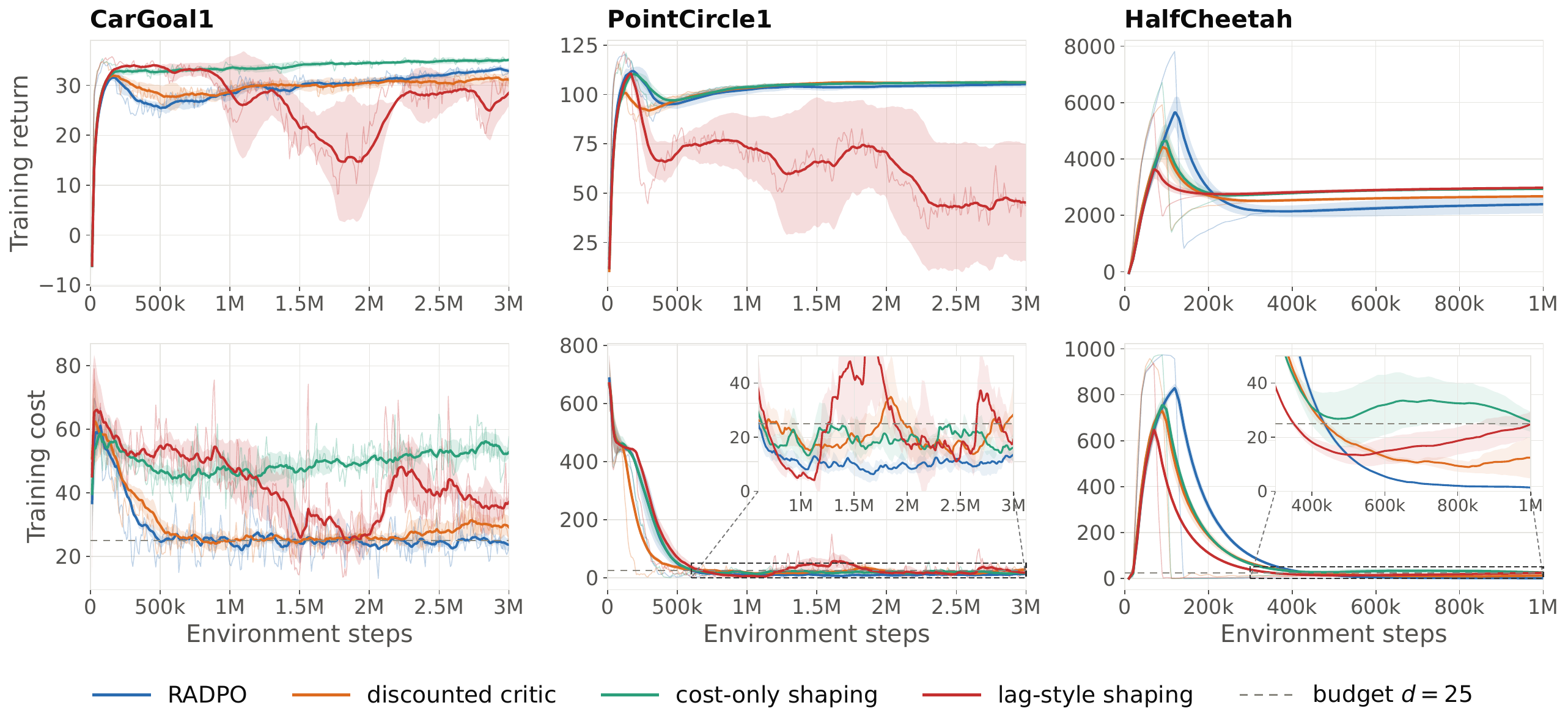}
    \vspace{-8mm}
    \caption{
    \textbf{Ablations on safety-signal formulation and penalty placement.}
    Training return (top) and episodic cost (bottom). Curves depict means across evaluated seeds with one standard deviation shaded. Solid lines show smoothing over raw trajectories (faint curves). Inset axes provide magnified views around the cost budget $d=25$.
    }
    \label{fig:ablation_results}
    \vspace{-4mm}
\end{figure*}

\paragraph{Safety-signal formulation.}
On CarGoal1, RADPO recovers target task returns while maintaining asymptotic costs close to the constraint boundary. Cost-only shaping yields higher returns but consistently breaches the safety limit, whereas discounted cost estimates result in worse returns and elevated violation rates. On PointCircle1, all three variants reach comparable asymptotic returns, but RADPO maintains lower late-training violation rates. On HalfCheetah, RADPO enforces the most conservative behavior, achieving the lowest cost among all variants at the expense of peak return. These results indicate that predictive reachability shaping offers an effective middle ground on obstacle avoidance tasks and robust constraint satisfaction in high-dimensional locomotion.
\vspace{-3mm}

\paragraph{Penalty placement in diffusion optimization.}
Penalizing candidate action weights directly via Lagrangian multipliers causes severe optimization instability: returns plummet sharply before partially recovering on CarGoal1, and drop permanently on PointCircle1. In contrast, incorporating safety into reward shaping stabilizes policy learning, yielding superior returns and tighter constraint adherence across both domains. On HalfCheetah, candidate-weight penalization preserves higher returns while operating near the constraint boundary.
% \vspace{-3mm}

%Collectively, these ablations confirm that both signal structure and algorithmic placement govern policy stability in safe diffusion RL. % Combining predictive reachability shaping with value-weighted denoising regression balances constraint enforcement and asymptotic return across diverse continuous-control benchmarks.
% ============================================================
% DISCUSSION
% ============================================================

\section{Discussion}
\label{sec:discussion}

\paragraph{Reachability interpretation.}
RADPO uses reachability as a policy-conditional prediction of
cost exposure rather than as a certificate that costs can be
avoided under an optimal controller.
Proposition~\ref{prop:stopping_time} shows that the value combines
the probability of a positive-cost event with the timing of its
first occurrence. Consequently, a smaller value can indicate either
a less likely or a later violation, and should not be interpreted
as an undiscounted violation probability. The discount $\gamma_c$
controls this temporal emphasis. RADPO therefore uses reachability as a predictive learning signal rather than as a safety certificate, since such guarantees would generally require additional assumptions in the model-free setting.
%\vspace{-3mm}

\paragraph{Separating reachability and budget feedback.}
A first-hit value does not encode violation severity or the
number of subsequent violations, hence it cannot directly represent
an episodic cost budget. RADPO instead uses the original episodic
costs to adapt $\lambda$, while reachability determines the
predictive penalty used to shape the reward. Repeated shaping
nevertheless remains sensitive to repeated exposure:
\Eqref{eq:reachability_decomposition} decomposes accumulated
reachability into discounted cost exposure and an additional
nonnegative predictive term assigned to safe predecessor states.
Theorem~\ref{thm:conservative_surrogate} and
Corollary~\ref{cor:repeated_costs} further relate this accumulated
signal to cumulative cost and repeated violations under exact
policy evaluation. Budget feedback adjusts the strength of this
surrogate but does not recover the cost-magnitude information
discarded by binarization. Because $\lambda$ is updated from
episodic cost but scales a reachability-based surrogate, it acts
as a feedback multiplier rather than as a standard dual variable
for the actor's objective.

\paragraph{Constraint structure and predictive reachability.}
We expect reachability shaping to be most useful when violations can be anticipated and avoided before cost is incurred, as in localized obstacle avoidance. In such settings, first-hit risk can distinguish trajectories that approach costs from those that maintain greater clearance, even when both have low immediate cost. The behavior in Figure~\ref{fig:reachability_motivation} is consistent with this interpretation. Its benefit may diminish when unsafe behavior persists for many steps, thus cumulative cost already provides a strong signal, or when safety directly conflicts with reward. The stronger CarGoal1 benefit and the smaller HalfCheetah effect in Figure~\ref{fig:ablation_results} are consistent with this hypothesis.
%\vspace{-3mm}

\paragraph{Limitations.} Our theoretical analysis establishes properties of individual RADPO components rather than an end-to-end feasibility result. Such a result would require stronger assumptions, for example
on function approximation, data coverage, and optimization
convergence, than we make here for RADPO. In particular, the reachability analysis assumes exact fixed-policy evaluation, whereas the implemented state-only reachability critic is deliberately trained from replay for off-policy data reuse, so its population target reflects the replay behavior mixture rather than exactly the current policy. Likewise, the candidate-improvement guarantee applies before approximate diffusion regression, and the multiplier bound applies only to the observed cost statistic. Reachability remains a surrogate rather than an exact representation of the CMDP cost. Successor-state shaping does not directly penalize the current transition cost, and first-hit shaping changes the temporal weighting of violations. For nonbinary costs, $h_t=\1\{c_t>0\}$ also discards magnitude information, although per-step costs are binary in all ten tasks considered here. Finally, budget regulation is indirect: episodic cost adjusts $\lambda$, which scales reachability shaping and influences the policy only through subsequent critic and actor updates. Transient violations and sensitivity to the controller timescale or multiplier cap can therefore still occur, as in primal-dual methods.

\section{Conclusion}
\label{sec:conclusion}

We introduced Reachability-Aware Diffusion Policy Optimization
(RADPO), a model-free approach to the relatively unexplored setting
of safe online diffusion-policy optimization under cumulative-cost
constraints. RADPO combines predictive first-hit reachability with
episodic budget feedback, while retaining candidate
reweighting without dynamics models, critic action gradients, or
reverse-process differentiation. Our theoretical analysis relates
accumulated reachability to cumulative cost and repeated violations,
and our experiments show substantial reductions in constraint
violations on several tasks while maintaining competitive returns.

\subsubsection*{AI Use Statement}

We used Generative AI tools to assist with parts of the conceptual and methodological development, mathematical claim formulation and refinement, and proof development and writing, as well as with experimental-design feedback and software implementation and debugging. In addition, they were used for literature search and summarization, manuscript drafting, and editing for typos, clarity and structure. AI tools were not used to propose or refine research hypotheses, interpret experimental results, generate, clean, or reformat datasets, perform translation, or conduct qualitative or thematic data analysis. All AI-assisted content, including mathematical arguments, code, and citations, was reviewed and verified by the authors. We take responsibility for the final content of our work.

\subsubsection*{Reproducibility Statement}
Complete proofs of the theoretical results are provided in Appendix~\ref{app:theory}. Task definitions are given in Appendix~\ref{app:task_details}, algorithmic and implementation details in Appendix~\ref{app:implementation}, and the complete experimental protocols, hyperparameters, baseline adaptations, and ablation configurations in Appendix~\ref{app:hyperparameters}. We additionally provide the code used for the experiments as supplementary material.

% ============================================================
% ACKNOWLEDGMENTS
% ============================================================
%\subsubsection*{Acknowledgments}

% TODO: Add acknowledgments and funding information.

% ============================================================
% REFERENCES
% ============================================================

\clearpage
\bibliography{iclr2027_conference}
\bibliographystyle{iclr2027_conference}

% ============================================================
% APPENDIX
% ============================================================

\appendix

\section{Theoretical Analysis}
\label{app:theory}

\subsection{Discounted reachability as a stopping-time value}
\label{app:stopping_time}

For a fixed policy $\pi$, define the Bellman operator
\begin{equation}
    (\mathcal T^\pi U)(s)
    :=
    \mathbb E_\pi
    \left[
        h_0
        +(1-h_0)(1-d_0)\gamma_c U(S_1)
        \,\middle|\,
        S_0=s
    \right].
    \label{eq:appendix_reachability_operator}
\end{equation}

\begin{lemma}[Contraction]
\label{lem:reachability_contraction}
For $0\leq\gamma_c<1$,
\[
    \|\mathcal T^\pi U-\mathcal T^\pi W\|_\infty
    \leq
    \gamma_c
    \|U-W\|_\infty.
\]
\end{lemma}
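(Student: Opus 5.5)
The plan is to subtract the two operator outputs pointwise so that the immediate term $h_0$ cancels. Then we bound what remains using the fact that the continuation weight lies in $[0,\gamma_c]$. Fix a state $s$ and bounded functions $U,W$. Since $\mathcal T^\pi$ is affine in its argument, linearity of conditional expectation gives
\[
    (\mathcal T^\pi U)(s)-(\mathcal T^\pi W)(s)
    =
    \gamma_c\,
    \mathbb E_\pi\!\left[
        (1-h_0)(1-d_0)\bigl(U(S_1)-W(S_1)\bigr)
        \,\middle|\,S_0=s
    \right].
\]
The expectation is well defined because $U$ and $W$ are bounded and $h_0,d_0\in\{0,1\}$. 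Here $h_0$ is a function of $c_0$, so it is measurable with respect to the one-step randomness of $(a_0,c_0,d_0,S_1)$. On the event $d_0=1$ the continuation is defined to be zero, so the convention on terminal values causes no issue.

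Next I would take absolute values inside the expectation (Jensen, or the triangle inequality for integrals). The weight satisfies $0\le (1-h_0)(1-d_0)\le 1$, and $|U(S_1)-W(S_1)|\le \|U-W\|_\infty$ for every realization of $S_1$. Together these give
\[
    \bigl|(\mathcal T^\pi U)(s)-(\mathcal T^\pi W)(s)\bigr|
    \le
    \gamma_c\,
    \mathbb E_\pi\!\left[(1-h_0)(1-d_0)\,\middle|\,S_0=s\right]\|U-W\|_\infty
    \le
    \gamma_c\|U-W\|_\infty .
\]
Taking the supremum over $s$ yields the claim. As a byproduct, the contraction is often strictly better than $\gamma_c$, with effective factor $\gamma_c\,\mathbb P(h_0=0,d_0=0\mid s)$, though this is not needed.

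There is no real obstacle. The only point requiring care is measure-theoretic. If $\|\cdot\|_\infty$ is an essential supremum rather than a pointwise supremum, the bound $|U(S_1)-W(S_1)|\le\|U-W\|_\infty$ must hold almost surely under the transition law. I would therefore work with the pointwise sup norm on bounded measurable functions, which is the space on which Proposition~\ref{prop:stopping_time} asserts a unique bounded fixed point. Alternatively, I would note that $\mathcal T^\pi$ maps bounded measurable functions to bounded measurable functions, so the contraction, combined with Banach's fixed-point theorem, yields that uniqueness.
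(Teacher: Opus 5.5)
Your proof is correct and follows essentially the same route as the paper: the immediate term $h_0$ cancels, you move the absolute value inside the conditional expectation using $0\le(1-h_0)(1-d_0)\le 1$, bound by $\|U-W\|_\infty$, and take the supremum over $s$. Your extra remarks on measurability and on the sharper state-dependent factor $\gamma_c\,\mathbb P(h_0=0,d_0=0\mid s)$ are valid but not needed for the lemma.
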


\begin{proof}
For every state $s$,
\begin{align*}
&\left|
(\mathcal T^\pi U)(s)
-
(\mathcal T^\pi W)(s)
\right|
\\
&=
\gamma_c
\left|
\mathbb E_\pi
\left[
(1-h_0)(1-d_0)
\bigl(U(S_1)-W(S_1)\bigr)
\mid S_0=s
\right]
\right|
\\
&\leq
\gamma_c
\mathbb E_\pi
\left[
|U(S_1)-W(S_1)|
\mid S_0=s
\right]
\\
&\leq
\gamma_c\|U-W\|_\infty.
\end{align*}
Taking the supremum over $s$ proves the claim. Since $\gamma_c<1$, the
operator is a contraction and therefore admits a unique bounded fixed point.
\end{proof}

\begin{proof}[Proof of Proposition~\ref{prop:stopping_time}]
By Lemma~\ref{lem:reachability_contraction},
$\mathcal T^\pi$ has a unique bounded fixed point.
Moreover, $\mathcal T^\pi$ maps functions with values in $[0,1]$
to functions with values in $[0,1]$. Iterating from the zero
function therefore shows that
\[
    0\leq V_{\mathrm{reach}}^\pi(s)\leq1.
\]

Unrolling \Eqref{eq:reachability_recursion} for $n$ transitions gives
\begin{align*}
V_{\mathrm{reach}}^\pi(s)
=
\mathbb E_s^\pi\Bigg[
&\sum_{k=0}^{n-1}
\gamma_c^k h_k
\prod_{j=0}^{k-1}(1-h_j)(1-d_j)
\\
&\quad+
\gamma_c^n
V_{\mathrm{reach}}^\pi(S_n)
\prod_{j=0}^{n-1}(1-h_j)(1-d_j)
\Bigg].
\end{align*}
The absolute value of the final term is bounded by
\[
    \gamma_c^n
    \|V_{\mathrm{reach}}^\pi\|_\infty,
\]
which converges to zero as $n\to\infty$ because
$0<\gamma_c<1$. Hence
\[
    V_{\mathrm{reach}}^\pi(s)
    =
    \mathbb E_s^\pi
    \left[
        \sum_{k\geq0}
        \gamma_c^k h_k
        \prod_{j=0}^{k-1}(1-h_j)(1-d_j)
    \right].
\]

On any realized trajectory, if the first cost occurs at
transition $\tau_H=k<\tau_{\mathrm{end}}$, then all terms vanish
except the one at $k$, which equals $\gamma_c^k$. If no cost
occurs before termination, or if neither a cost nor
termination ever occurs, every first-hit term is zero. Therefore
\[
    V_{\mathrm{reach}}^\pi(s)
    =
    \mathbb E_s^\pi
    \left[
        \gamma_c^{\tau_H}
        \1{\{\tau_H<\tau_{\mathrm{end}}\}}
    \right].
\]

For the random-horizon interpretation, let $L_{\gamma_c}$ be
independent of the trajectory with
\[
    \Pr(L_{\gamma_c}=\ell)
    =
    (1-\gamma_c)\gamma_c^\ell,
    \qquad \ell=0,1,2,\ldots.
\]
Then
\[
    \Pr(L_{\gamma_c}\geq k)=\gamma_c^k.
\]
Conditioning on $\tau_H=k$ therefore gives
\begin{align*}
&\Pr_s^\pi
\left(
\tau_H\leq L_{\gamma_c},
\ \tau_H<\tau_{\mathrm{end}}
\right)
\\
&=
\sum_{k\geq0}
\Pr_s^\pi
\left(
\tau_H=k,\,
\tau_H<\tau_{\mathrm{end}}
\right)
\Pr(L_{\gamma_c}\geq k)
\\
&=
\sum_{k\geq0}
\gamma_c^k
\Pr_s^\pi
\left(
\tau_H=k,\,
\tau_H<\tau_{\mathrm{end}}
\right)
\\
&=
V_{\mathrm{reach}}^\pi(s).
\end{align*}
\end{proof}

\begin{lemma}[Population consistency under fixed-policy sampling]
\label{lem:bce_consistency}
Fix a stationary policy $\pi$ and a target function
$\bar V:\mathcal S\rightarrow[0,1]$. Suppose that, conditional on
$S_t=s$, the transition $(A_t,h_t,d_t,S_{t+1})$ is generated
under policy $\pi$ and the environment dynamics. Define
\[
    Y
    :=
    h_t
    +(1-h_t)(1-d_t)\gamma_c\bar V(S_{t+1}).
\]
For a fixed state $s$, the minimizer of the conditional population
binary-cross-entropy loss
\[
    \mathbb E_\pi
    \left[
        -Y\log v-(1-Y)\log(1-v)
        \mid S_t=s
    \right]
\]
is
\[
    v^*(s)
    =
    \mathbb E_\pi[Y\mid S_t=s]
    =
    (\mathcal T^\pi\bar V)(s).
\]
\end{lemma}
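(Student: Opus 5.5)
The plan is to reduce the statement to the elementary fact that, for a $[0,1]$-valued random target, binary cross-entropy is minimized pointwise at the conditional mean. I will then identify that mean with the Bellman operator $\mathcal T^\pi$ of \Eqref{eq:appendix_reachability_operator}.

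\emph{Step 1: $Y$ is a valid soft label.} Since $h_t\in\{0,1\}$, $d_t\in\{0,1\}$, $\gamma_c\in[0,1)$ and $\bar V\in[0,1]$, either $Y=1$ (when $h_t=1$) or $Y=(1-d_t)\gamma_c\bar V(S_{t+1})\in[0,1)$. Hence $Y\in[0,1]$ almost surely. Set $\bar y:=\mathbb E_\pi[Y\mid S_t=s]\in[0,1]$.

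\emph{Step 2: reduce to a scalar problem.} The loss is affine in $Y$ for each fixed $v\in(0,1)$. Linearity of conditional expectation therefore gives
\[
\ell_s(v):=\mathbb E_\pi\!\left[-Y\log v-(1-Y)\log(1-v)\mid S_t=s\right]=-\bar y\log v-(1-\bar y)\log(1-v).
\]
Integrability is not a concern: both coefficients lie in $[0,1]$ and $v$ is fixed in the open interval.

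\emph{Step 3: minimize the scalar function.} I will use the decomposition
\[
\ell_s(v)=H(\bar y)+D_{\mathrm{KL}}\bigl(\mathrm{Bern}(\bar y)\,\|\,\mathrm{Bern}(v)\bigr),
\]
where $H$ is the binary entropy. Gibbs' inequality then shows that $\bar y$ is the unique minimizer. Equivalently, $\ell_s$ is strictly convex on $(0,1)$ with $\ell_s'(v)=(v-\bar y)/(v(1-v))$.

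The only real subtlety is the boundary cases $\bar y\in\{0,1\}$. There the minimizer lies at an endpoint of $[0,1]$, so I will extend $\ell_s$ to $[0,1]$ with the convention $0\log0=0$. For example, if $\bar y=1$, then $\ell_s(v)=-\log v$ is decreasing on $(0,1]$ and is minimized at $v=1$. The case $\bar y=0$ is symmetric, with the minimum at $v=0$.

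\emph{Step 4: identify the mean with $\mathcal T^\pi\bar V$.} By stationarity of $\pi$ and time-homogeneity of the dynamics, the conditional law of $(h_t,d_t,S_{t+1})$ given $S_t=s$ equals the law of $(h_0,d_0,S_1)$ given $S_0=s$. Therefore
\[
\bar y=\mathbb E_\pi\!\left[h_0+(1-h_0)(1-d_0)\gamma_c\bar V(S_1)\mid S_0=s\right]=(\mathcal T^\pi\bar V)(s),
\]
which completes the argument.
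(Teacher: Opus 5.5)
Your proposal is correct and follows the paper's proof. The paper also uses linearity to reduce the loss to $-m\log v-(1-m)\log(1-v)$ with $m=\mathbb E_\pi[Y\mid S_t=s]$, and then the identity $\mathcal L_s(v)-\mathcal L_s(m)=D_{\mathrm{KL}}(\mathrm{Ber}(m)\,\|\,\mathrm{Ber}(v))\geq 0$, which is your entropy-plus-KL decomposition. It then identifies $m$ with $(\mathcal T^\pi\bar V)(s)$ directly from the operator's definition. Your extra checks ($Y\in[0,1]$, the endpoint cases $m\in\{0,1\}$, and the stationarity argument for shifting the time index to $t=0$) are details the paper leaves implicit.
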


\begin{proof}
Let
\[
    m(s):=\mathbb E_\pi[Y\mid S_t=s].
\]
Linearity of expectation gives
\[
    \mathcal L_s(v)
    =
    -m\log v-(1-m)\log(1-v).
\]
Subtracting the value at $v=m$,
\[
    \mathcal L_s(v)-\mathcal L_s(m)
    =
    D_{\mathrm{KL}}
    \bigl(
        \mathrm{Ber}(m)
        \|
        \mathrm{Ber}(v)
    \bigr)
    \geq0.
\]
Hence the population minimizer is $v=m$. Under the fixed-policy sampling assumption of the lemma, $m(s)=(\mathcal T^\pi\bar V)(s)$ by \Eqref{eq:appendix_reachability_operator}.
\end{proof}

\paragraph{Off-policy replay.}
Lemma~\ref{lem:bce_consistency} explains population
regression under fixed-policy sampling. RADPO instead trains
the state-only reachability critic from replay, consistent with
its off-policy backbone. Let $\beta_{\mathcal D}(a\mid s)$
denote the effective conditional behavior distribution represented
in the replay buffer. For fixed $\bar V$, the same BCE argument
gives
\[
    v_{\mathcal D}^*(s)
    =
    \mathbb E_{\mathcal D}[Y\mid S_t=s]
    =
    (\mathcal T^{\beta_{\mathcal D}}\bar V)(s),
\]
which generally differs from
$(\mathcal T^\pi\bar V)(s)$ for the current policy.
This is a deliberate trade-off: replay permits reuse of safety
transitions and retains previously observed cost encounters
that may become sparse as the policy improves. Since RADPO uses
$V_{reach}$ as a predictive shaping signal rather than as a safety
certificate, the implementation favors this data reuse over exact
on-policy evaluation. The resulting discrepancy should therefore
be viewed as replay-distribution approximation error.
\subsection{Conservative relation to cumulative cost}
\label{app:conservative_surrogate}

\begin{proof}[Proof of Theorem~\ref{thm:conservative_surrogate}]
Start from the cumulative cost value:
\begin{align*}
Q_c^{\pi,\delta}(s,a)
&=
\mathbb E[c_t\mid s,a]
+
\sum_{k\geq1}
\delta^k
\mathbb E[c_{t+k}\mid s,a].
\end{align*}
Because
\[
    0\leq c_{t+k}\leq c_{\max}h_{t+k},
\]
we obtain
\begin{equation}
    Q_c^{\pi,\delta}(s,a)
    \leq
    \mathbb E[c_t\mid s,a]
    +
    c_{\max}
    \sum_{k\geq1}
    \delta^k
    \mathbb E[h_{t+k}\mid s,a].
    \label{eq:appendix_cost_cost_bound}
\end{equation}

The reachability Bellman equation implies
\[
    V_{\mathrm{reach}}^\pi(s)
    \geq
    \mathbb E[h_t\mid S_t=s],
\]
because its continuation term is nonnegative. Applying the tower property,
\[
    \mathbb E
    \left[
        V_{\mathrm{reach}}^\pi(S_{t+k})
        \mid s,a
    \right]
    \geq
    \mathbb E[h_{t+k}\mid s,a].
\]
Furthermore, $\delta\leq\gamma$ implies, for $k\geq1$,
\[
    \delta^k
    =
    \delta\delta^{k-1}
    \leq
    \delta\gamma^{k-1}.
\]
Hence
\begin{align*}
\sum_{k\geq1}
\delta^k
\mathbb E[h_{t+k}\mid s,a]
&\leq
\delta
\sum_{k\geq1}
\gamma^{k-1}
\mathbb E
\left[
V_{\mathrm{reach}}^\pi(S_{t+k})
\mid s,a
\right]
\\
&=
\delta
G_{\gamma,\gamma_c}^{\pi}(s,a).
\end{align*}
Substituting into
\Eqref{eq:appendix_cost_cost_bound} proves
\Eqref{eq:conservative_surrogate}.

If $c_t=0$ almost surely under $(s,a)$, the immediate term vanishes and
\Eqref{eq:conservative_surrogate_safe} follows by rearranging.
\end{proof}

The source of the additional conservatism can be made explicit. Applying the
reachability Bellman equation yields
\begin{align}
\label{eq:reachability_decomposition}
G_{\gamma,\gamma_c}^{\pi}(s,a)
&=
D_h^{\pi,+}(s,a)
+
\gamma_c
P_{\mathrm{pred}}^\pi(s,a),
%\label{eq:predictive_uplift}
\\
P_{\mathrm{pred}}^\pi(s,a)
&:=
\mathbb E_{s,a}^{\pi}
\left[
\sum_{k\geq0}
\gamma^k
(1-h_{t+k+1})
(1-d_{t+k+1})
V_{\mathrm{reach}}^\pi(S_{t+k+2})
\right]
\geq0.
\nonumber
\end{align}
Thus the difference between accumulated reachability and immediate discounted
cost exposure consists of a nonnegative predictive term assigned to safe
predecessor states.

\subsection{Repeated cost exposure}
\label{app:repeated_costs}

\begin{proof}[Proof of Corollary~\ref{cor:repeated_costs}]
From the reachability Bellman recursion,
\[
    V_{\mathrm{reach}}^\pi(S_{t+k})
    \geq
    \mathbb E
    \left[
        h_{t+k}
        \mid S_{t+k}
    \right].
\]
Taking conditional expectation from the initial $(s,a)$, multiplying by the
corresponding powers of $\gamma$, and summing gives
\[
    G_{\gamma,\gamma_c}^{\pi}(s,a)
    \geq
    D_h^{\pi,+}(s,a).
\]

For the upper bound, first-hit reachability is pathwise bounded by the
discounted sum of all future costs:
\[
    V_{\mathrm{reach}}^\pi(S_n)
    \leq
    \mathbb E
    \left[
        \sum_{j\geq0}
        \gamma_c^j h_{n+j}
        \,\middle|\,
        S_n
    \right].
\]
Therefore
\begin{align*}
G_{\gamma,\gamma_c}^{\pi}(s,a)
&\leq
\sum_{k\geq0}
\sum_{j\geq0}
\gamma^k\gamma_c^j
\mathbb E_{s,a}^{\pi}
[h_{t+k+1+j}].
\end{align*}
Set $u=k+1+j$. For each fixed $u\geq1$, the coefficient multiplying
$\mathbb E[h_{t+u}]$ is
\[
    \sum_{j=0}^{u-1}
    \gamma^{u-1-j}\gamma_c^j
    =
    \gamma^{u-1}
    \sum_{j=0}^{u-1}
    \left(
        \frac{\gamma_c}{\gamma}
    \right)^j.
\]
Since $\gamma_c<\gamma$,
\[
    \sum_{j=0}^{u-1}
    \left(
        \frac{\gamma_c}{\gamma}
    \right)^j
    \leq
    \frac{1}{1-\gamma_c/\gamma}.
\]
Hence
\[
    G_{\gamma,\gamma_c}^{\pi}(s,a)
    \leq
    \frac{1}{1-\gamma_c/\gamma}
    D_h^{\pi,+}(s,a),
\]
which proves the result.
\end{proof}

\subsection{Finite-candidate mirror descent}
\label{app:candidate_improvement}

\begin{lemma}[Empirical Gibbs variational identity]
\label{lem:gibbs_variational}
Let
\[
    p_i
    =
    \frac{\exp(z_i/\tau)}
    {\sum_j\exp(z_j/\tau)},
    \qquad
    u_i=\frac1K.
\]
Then
\[
    p
    =
    \arg\max_{v\in\Delta_K}
    \left\{
        \langle v,z\rangle
        -
        \tau D_{\mathrm{KL}}(v\|u)
    \right\},
\]
and
\[
    \langle p-u,z\rangle
    =
    \tau
    \left[
        D_{\mathrm{KL}}(p\|u)
        +
        D_{\mathrm{KL}}(u\|p)
    \right].
\]
\end{lemma}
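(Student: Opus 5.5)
The plan is to prove both claims from the explicit log-form of the Gibbs weights, $\log p_i=z_i/\tau-\log Z$ with $Z=\sum_j\exp(z_j/\tau)$, and $\log u_i=-\log K$. This log-ratio representation turns each KL divergence into an inner product with $z$ plus constants, which is all that is needed.

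\emph{Variational identity.} For any $v\in\Delta_K$ I would compute
\[
\langle v,z\rangle-\tau D_{\mathrm{KL}}(v\|u)
=\tau\log\tfrac{Z}{K}-\tau D_{\mathrm{KL}}(v\|p).
\]
To obtain this, write
\[
D_{\mathrm{KL}}(v\|u)=\sum_i v_i\log v_i+\log K
\]
and substitute $z_i=\tau(\log p_i+\log Z)$. This gives
\[
\langle v,z\rangle=\tau\sum_i v_i\log p_i+\tau\log Z,
\]
and combining the two terms produces $-\tau\sum_i v_i\log(v_i/p_i)$. Since $D_{\mathrm{KL}}(v\|p)\ge0$, with equality iff $v=p$ (note $p_i>0$ for all $i$), the unique maximizer is $p$. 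This is the usual Donsker--Varadhan/Gibbs argument and avoids Lagrange multipliers and boundary cases entirely.

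\emph{Symmetrized-KL identity.} I would show that $\langle p-u,z\rangle$ equals $\tau$ times the Jeffreys divergence. The key step is
\[
D_{\mathrm{KL}}(p\|u)+D_{\mathrm{KL}}(u\|p)=\sum_i (p_i-u_i)\log\tfrac{p_i}{u_i}.
\]
Here $\log(p_i/u_i)=z_i/\tau-\log Z+\log K$. Because $\sum_i(p_i-u_i)=0$, the constant $-\log Z+\log K$ drops out, leaving $\tfrac1\tau\langle p-u,z\rangle$. Multiplying by $\tau$ gives the claim. The only delicate point is the cancellation of the normalizing constants, which relies on both $p$ and $u$ summing to one. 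Nonnegativity follows immediately, since each KL term is nonnegative.

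I expect no real obstacle here. The lemma is then applied to Theorem~\ref{thm:candidate_improvement} by passing from $z$ to $q$. Where clipping is inactive, $z_i=(q_i-\bar q)/\sigma$ is an increasing affine map, so $\langle p-u,q\rangle=\sigma\langle p-u,z\rangle$ because the shift $\bar q$ cancels against $\sum_i(p_i-u_i)=0$. Clipping breaks this equality, and that step belongs to the theorem rather than the lemma. There I would use that $z_i$ is a monotone (clipped) function of $q_i$, so that $p$ and $q$ are comonotone. A rearrangement/Chebyshev-type inequality would then bound the gain from below. That bound is the step where care is genuinely needed, and it is not part of the lemma itself.
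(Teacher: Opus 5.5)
Your proof is correct and follows essentially the same route as the paper, which also rewrites the objective as $\tau\log Z_u-\tau D_{\mathrm{KL}}(v\|p)$ with $Z_u=\frac1K\sum_i\exp(z_i/\tau)$ (your $\tau\log(Z/K)$), and obtains the symmetrized identity from $D_{\mathrm{KL}}(p\|u)+D_{\mathrm{KL}}(u\|p)=\sum_i(p_i-u_i)\log(p_i/u_i)$, with the normalizing constant cancelling because $\sum_i(p_i-u_i)=0$. Your closing remarks on the clipped case also match how the paper proves Theorem~\ref{thm:candidate_improvement}: it uses a nondecreasing remainder $R_i=q_i-\sigma z_i$ together with a pairwise covariance argument.
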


\begin{proof}
Let
\[
    Z_u
    :=
    \frac1K
    \sum_i
    \exp(z_i/\tau).
\]
Then
\[
    p_i
    =
    \frac{
        u_i\exp(z_i/\tau)
    }{
        Z_u
    },
\]
so
\[
    \log\frac{p_i}{u_i}
    =
    \frac{z_i}{\tau}
    -
    \log Z_u.
\]
For arbitrary $v\in\Delta_K$,
\begin{align*}
D_{\mathrm{KL}}(v\|p)
&=
D_{\mathrm{KL}}(v\|u)
-
\frac1\tau
\langle v,z\rangle
+
\log Z_u.
\end{align*}
Rearranging,
\[
    \langle v,z\rangle
    -
    \tau D_{\mathrm{KL}}(v\|u)
    =
    \tau\log Z_u
    -
    \tau D_{\mathrm{KL}}(v\|p),
\]
which is uniquely maximized at $v=p$.

Moreover,
\begin{align*}
&D_{\mathrm{KL}}(p\|u)
+
D_{\mathrm{KL}}(u\|p)
\\
&=
\sum_i
(p_i-u_i)
\log\frac{p_i}{u_i}
\\
&=
\frac1\tau
\sum_i
(p_i-u_i)z_i,
\end{align*}
because $\sum_i(p_i-u_i)=0$.
\end{proof}

\begin{proof}[Proof of Theorem~\ref{thm:candidate_improvement}]
The first statement follows directly from
Lemma~\ref{lem:gibbs_variational}.

For the raw-$Q$ bound, define
\[
    \sigma:=\max(\sigma_A,10^{-6}),
    \qquad
    R_i:=q_i-\sigma z_i.
\]
Then
\[
    q_i=\sigma z_i+R_i.
\]
For fixed $\bar q$, the scalar function
\[
    R(q)
    =
    q
    -
    \sigma
    \operatorname{clip}
    \left(
        \frac{q-\bar q}{\sigma},
        -3,3
    \right)
\]
is non-decreasing in $q$. The softmax probability $p_i$ is also non-decreasing
with $q_i$, because both clipping and exponentiation preserve order.
Consequently,
\[
    (p_i-p_j)(R_i-R_j)\geq0
    \qquad
    \forall i,j.
\]
Using the pairwise covariance identity,
\[
    \sum_i
    (p_i-u_i)R_i
    =
    \frac1K
    \sum_{i<j}
    (p_i-p_j)(R_i-R_j)
    \geq0.
\]
Therefore
\begin{align*}
\sum_i(p_i-u_i)q_i
&=
\sigma
\sum_i(p_i-u_i)z_i
+
\sum_i(p_i-u_i)R_i
\\
&\geq
\sigma
\sum_i(p_i-u_i)z_i.
\end{align*}
Applying Lemma~\ref{lem:gibbs_variational},
\[
    \sum_i(p_i-u_i)q_i
    \geq
    \tau\sigma
    \left[
        D_{\mathrm{KL}}(p\|u)
        +
        D_{\mathrm{KL}}(u\|p)
    \right]
    \geq0.
\]
This is \Eqref{eq:raw_q_improvement}.
\end{proof}

\subsection{Safety-multiplier budget tracking}
\label{app:budget_tracking}

For update index $j$, define the normalized observed budget error
\[
    e_j
    :=
    \frac{
        \widehat J_{c,j}-d
    }{
        d
    }.
\]
Define the amount discarded by upper clipping as
\[
    u_j
    :=
    \left[
        \lambda_j+\eta_\lambda e_j-\lambda_{\max}
    \right]_+.
\]

\begin{proposition}[Capped budget-tracking bound]
\label{prop:capped_budget_tracking}
For every $N\geq1$,
\begin{equation}
    \frac1N
    \sum_{j=0}^{N-1}
    \widehat J_{c,j}
    \leq
    d
    +
    \frac{d}{\eta_\lambda N}
    \left[
        \lambda_{\max}-\lambda_0
        +
        \sum_{j=0}^{N-1}u_j
    \right].
    \label{eq:capped_budget_tracking}
\end{equation}
\end{proposition}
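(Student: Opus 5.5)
The plan is a telescoping argument on the multiplier recursion. The key step is a one-step lower bound on each increment $\lambda_{j+1}-\lambda_j$ in terms of $\eta_\lambda e_j$ and the discarded amount $u_j$.

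Write the projected update as $\lambda_{j+1}=\Pi_{[0,\lambda_{\max}]}[\lambda_j+\eta_\lambda e_j]$. Set $x_j:=\lambda_j+\eta_\lambda e_j$; by induction every $\lambda_j\in[0,\lambda_{\max}]$. The claim is
\[
\lambda_{j+1}\geq x_j-u_j .
\]
It follows from a case split.
\begin{itemize}
\item If $x_j>\lambda_{\max}$, then $\lambda_{j+1}=\lambda_{\max}=x_j-u_j$.
\item If $0\leq x_j\leq\lambda_{\max}$, then $u_j=0$ and $\lambda_{j+1}=x_j$.
\item If $x_j<0$, then $u_j=0$ and $\lambda_{j+1}=0>x_j$.
\end{itemize}
Lower clipping can only increase $\lambda$, which is why it does not appear in the bound. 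Rearranging the claim gives
\[
\eta_\lambda e_j\leq \lambda_{j+1}-\lambda_j+u_j .
\]

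Sum this over $j=0,\dots,N-1$. The $\lambda$ terms telescope to $\lambda_N-\lambda_0\leq\lambda_{\max}-\lambda_0$, using $\lambda_N\leq\lambda_{\max}$. Hence
\[
\eta_\lambda\sum_{j=0}^{N-1}e_j\leq\lambda_{\max}-\lambda_0+\sum_{j=0}^{N-1}u_j .
\]
Substitute $e_j=(\widehat J_{c,j}-d)/d$, multiply by $d/(\eta_\lambda N)$ (valid since $d>0$ and $\eta_\lambda>0$), and move $d$ to the right-hand side. This yields \eqref{eq:capped_budget_tracking}.

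Proposition~\ref{prop:budget_tracking} is then the special case in which every $u_j=0$. Its asymptotic statement follows by letting $N\to\infty$, since the correction term is $O(1/N)$.

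I do not expect a real obstacle here. The only point needing care is the lower-clipping case, where we need the inequality in the correct direction rather than equality. One should also note that the bound holds pathwise, with no assumption on how the statistics $\widehat J_{c,j}$ are generated.
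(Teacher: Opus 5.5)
Your proof is correct and takes essentially the paper's route. The paper writes the projection exactly as $\lambda_{j+1}=x_j+\ell_j-u_j$ with lower-clipping correction $\ell_j=[-x_j]_+\geq 0$ and then drops $\ell_j$ after summing, which is your inequality $\lambda_{j+1}\geq x_j-u_j$; the telescoping, the bound $\lambda_N\leq\lambda_{\max}$, and the substitution of $e_j$ are the same.
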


\begin{proof}
Let
\[
    x_j
    :=
    \lambda_j+\eta_\lambda e_j
\]
be the unprojected update and define the lower-projection correction
\[
    \ell_j:=[-x_j]_+.
\]
Projection onto $[0,\lambda_{\max}]$ can be written exactly as
\[
    \lambda_{j+1}
    =
    \lambda_j
    +
    \eta_\lambda e_j
    +
    \ell_j
    -
    u_j.
\]
Rearranging and summing,
\begin{align*}
\eta_\lambda
\sum_{j=0}^{N-1}e_j
&=
\lambda_N-\lambda_0
-
\sum_{j=0}^{N-1}\ell_j
+
\sum_{j=0}^{N-1}u_j
\\
&\leq
\lambda_{\max}-\lambda_0
+
\sum_{j=0}^{N-1}u_j,
\end{align*}
because $\lambda_N\leq\lambda_{\max}$ and $\ell_j\geq0$.
Substituting
\[
    e_j
    =
    \frac{\widehat J_{c,j}-d}{d}
\]
and rearranging proves
\Eqref{eq:capped_budget_tracking}.
If the upper projection never activates, then $u_j=0$ for all $j$, reducing
\Eqref{eq:capped_budget_tracking} exactly to
\Eqref{eq:budget_tracking}.
\end{proof}

At any converged interior fixed point, $0 < \lambda^* < \lambda_{\max}$,
with limiting controller statistic $\widehat J_c^*$, the projection is
inactive. Therefore
\[
    \lambda^*
    =
    \lambda^*
    +
    \eta_\lambda
    \frac{\widehat J_c^* - d}{d}
\]
implies $\widehat J_c^* = d$. Boundary fixed points instead satisfy the
corresponding one-sided conditions:
\[
    \lambda^* = 0 \Rightarrow \widehat J_c^* \leq d,
    \qquad
    \lambda^* = \lambda_{\max}
    \Rightarrow \widehat J_c^* \geq d.
\]
These conditions resemble complementary slackness, but $\lambda$ is not a
standard Lagrange multiplier for the same objective optimized by the actor,
consequently we use them only as an interpretation of the controller equilibrium.

\section{Task Details}
\label{app:task_details}

We evaluate on ten environments from Safety Gymnasium~\citep{ji2024safetygymnasiumunifiedsafereinforcement}: four safe-navigation
tasks obtained by combining the Point and Car agents with the Goal1 and
Circle1 tasks, and six velocity-constrained MuJoCo locomotion tasks.

\paragraph{Navigation agents.}
The \textbf{Point} agent is a simple planar robot with two control inputs:
forward/backward motion and rotation about the vertical axis. The
\textbf{Car} agent uses two independently actuated parallel wheels together
with a free-rolling rear wheel, such that steering and forward motion must be
coordinated through the two wheel controls. We evaluate both agents on the
Goal1 and Circle1 tasks.

\paragraph{Navigation tasks.}
In \textbf{Goal1}, the agent is rewarded for reducing its distance to a
target and receives an additional reward when the target is reached, after
which a new target is generated. The Level-1 environment contains eight
hazard regions and one movable vase; entering a hazard produces safety cost,
whereas the vase is not part of the safety constraint. In
\textbf{Circle1}, the agent is rewarded for moving around a prescribed
circular path while remaining close to its radius. Two virtual boundary walls
restrict the admissible region, and crossing from the interior to the exterior
produces safety cost.

% Optional representative task figure:
% \begin{figure}[t]
%     \centering
%     \includegraphics[width=.32\linewidth]{figures/point_goal1.png}
%     \includegraphics[width=.32\linewidth]{figures/car_circle1.png}
%     \includegraphics[width=.32\linewidth]{figures/halfcheetah_velocity.png}
%     \caption{Representative Safety Gymnasium environments.
%     Left: Goal1, in which the agent navigates to targets while avoiding
%     hazards. Center: Circle1, in which the agent follows a circular path
%     while remaining within the safety boundaries. Right: a Safe Velocity
%     task, in which locomotion performance is optimized subject to an
%     agent-specific velocity limit.}
%     \label{fig:task_examples}
% \end{figure}

\paragraph{Velocity-constrained locomotion.}
The remaining six environments augment the standard Gymnasium MuJoCo
locomotion tasks with a velocity constraint. We consider \textbf{Ant}, a
3D quadruped; \textbf{HalfCheetah}, a planar running robot;
\textbf{Hopper}, a planar monoped; \textbf{Humanoid}, a 3D biped;
\textbf{Swimmer}, an articulated swimming agent; and \textbf{Walker2d},
a planar biped. The underlying task rewards locomotion according to the
corresponding MuJoCo environment, while Safety Gymnasium produces a binary
cost whenever the agent's velocity exceeds an environment-specific threshold.
For Swimmer the constraint is applied to the $x$-axis velocity; for the other
agents the relevant planar velocity magnitude is used.

\section{Additional Ablations}
\label{app:additional_ablations}
% hyperparameter ablation will come here

To justify our hyperparameter choices, we show our ablations for reachability discount $\gamma_c$, shaping coefficient $\beta$ and candidates per state $K$ in Figure~\ref{fig:hyperparameter_ablation}. As expected, higher values of $\gamma_c$ yield a more conservative policy, especially for HalfCheetah. Similarly, higher $\beta$ results in a more conservative policy; for PointGoal1, when $\beta$ is too high the reward signal is drowned out and the policy fails to learn. Finally for $K$, we only possibly see a slight improvement in safety for higher values in exchange for a higher computational cost. 

\begin{figure*}[t]
    \centering
    \includegraphics[width=\textwidth]{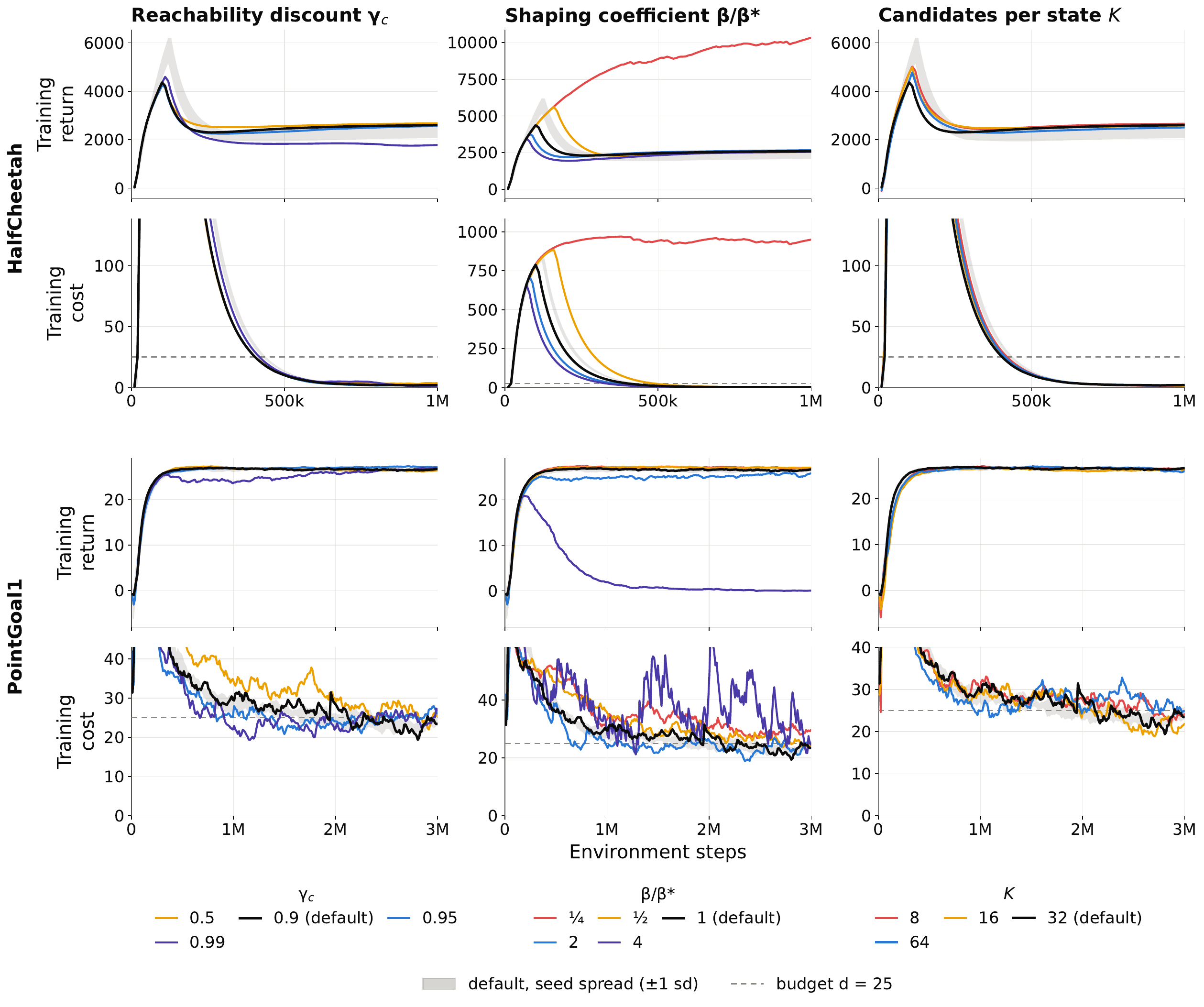}
    \vspace{-8mm}
    \caption{
    \textbf{Ablations on hyperparameter values.}
    Ablation results for reachability discount $\gamma_c$, shaping coefficient $\beta$ and candidates per state $K$. Curves depict means across evaluated seeds with one standard deviation shaded.
    }
    \label{fig:hyperparameter_ablation}
    \vspace{-4mm}
\end{figure*}
\section{Computational Costs}
\label{app:computational_costs}
% RADPO, ALGD, DPMD-Lag pointgoal1 computational costs for phases of the algorithms (in terms of hours) will come here

Table~\ref{tab:pointgoal1-compute} shows computational cost comparison of ALGD, Lagrangian-style shaping and RADPO on PointGoal1. Runs were taken on a computer with one RTX 3090 Ti and i9-12900F. Lagrangian-style and RADPO runs were taken with our codebase. ALGD uses the official implementation provided by their authors, modified only to use Safety Gymnasium and log the timing details shown in this table. Note that these runs retain the 10/0.25/0.25 actual learner updates per transition for ALGD, Lagrangian-style and RADPO respectively, as described in their default configurations, which means that ALGD has $40\times$ more updates per environment step, hence the significantly longer times. 

\begin{table}[t]
\centering
\small
% \begin{tabular}{@{}lrrrrr@{}}
% \toprule
% & \multicolumn{5}{c}{Time per environment transition (ms)} \\
% \cmidrule(l){2-6}
% Method & \shortstack{Sample\\acquisition}
%        & \shortstack{Critic\\optimization}
%        & \shortstack{Diffusion-policy\\optimization}
%        & \shortstack{Other}
%        & \shortstack{Overall} \\
% \midrule
% ALGD & 3.298 & 16.057 & 78.892 & 41.524 & 138.420 \\
% Lagrangian-style  & 0.418 & 0.545 & 0.459 & 3.387 & 2.919 \\
% RADPO & 0.616 & 0.423 & 0.406 & 2.697 & 2.962 \\
% \bottomrule
% \end{tabular}
\begin{tabular}{@{}lrrrrr@{}}
\toprule
& \multicolumn{5}{c}{Time per environment transition (ms)} \\
\cmidrule(l){2-6}
Method
& \shortstack{Sample\\acquisition}
& \shortstack{Critic\\optimization}
& \shortstack{Diffusion-policy\\optimization}
& \shortstack{Other\\learning}
& \shortstack{Overall} \\
\midrule
ALGD
& 3.298 & 16.057 & 78.892 & 41.524 & 139.771 \\
Lagrangian-style
& 0.418 & 0.545 & 0.459 & 3.387 & 4.809 \\
RADPO
& 0.616 & 0.423 & 0.406 & 2.697 & 4.142 \\
\bottomrule
\end{tabular}
\caption{Computational cost on PointGoal1. All entries are ms per environment step.}
\label{tab:pointgoal1-compute}
\end{table}

\section{Additional Figures}
\label{app:additional_figures}
Figure~\ref{fig:state_dist} shows the state distribution of the paths shown in Figure~\ref{fig:reachability_motivation}.

\begin{figure}[t]
      \centering
      \includegraphics[width=0.5\linewidth]{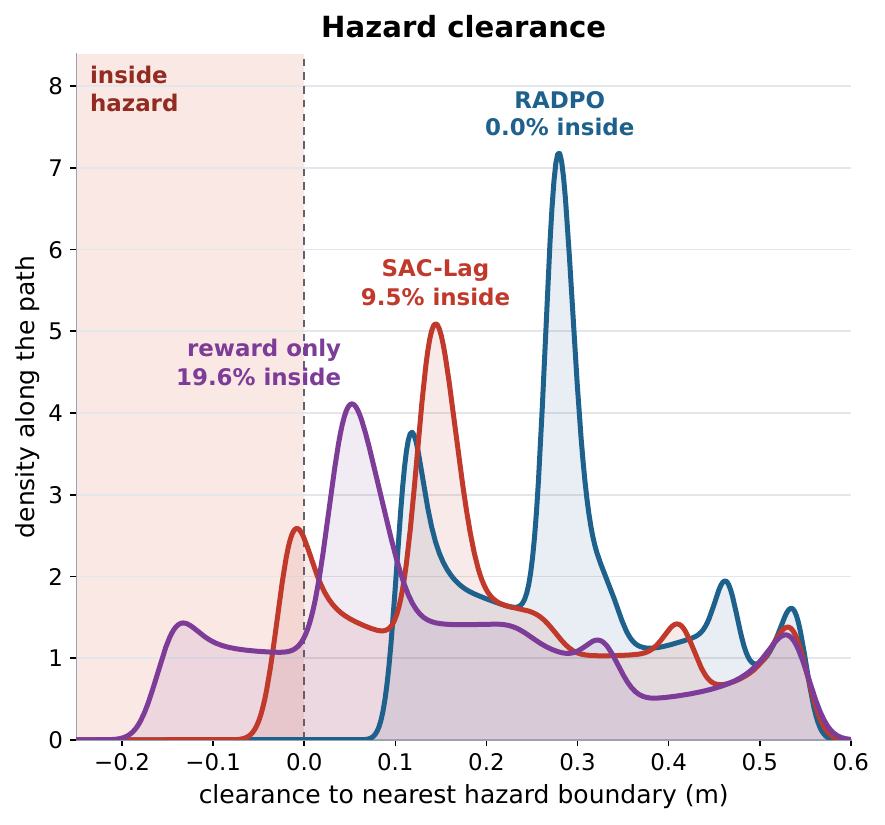}
      \vspace{-4mm}
      
  \caption{
    Distribution of states for trajectories in Figure~\ref{fig:reachability_motivation}.A.
  }
      \label{fig:state_dist}
    \vspace{-4mm}
  \end{figure}

\section{Implementation Details}
\label{app:implementation}

% TODO:
% Full hyperparameter table
% Network architectures
% Diffusion schedule
% Optimizer settings
% Replay buffer
% Training/evaluation details

\subsection{Safety-multiplier Cost Estimate}
\label{app:safety_multiplier_details}

The safety-multiplier controller operates on completed training
episodes. For each collection event $j$ with a nonempty set
of completed episodes $\mathcal E_j$, we compute the mean
undiscounted episodic cost
\begin{equation}
    \bar C_j
    =
    \frac{1}{|\mathcal E_j|}
    \sum_{e\in\mathcal E_j}
    \sum_{t=0}^{L_e-1}c_t^{(e)}.
    \label{eq:completion_batch_cost}
\end{equation}
The controller statistic averages these completion-batch
means over a recent window $\mathcal W$:
\begin{equation}
    \widehat J_c
    =
    \frac{1}{|\mathcal W|}
    \sum_{j\in\mathcal W}\bar C_j.
    \label{eq:controller_cost_estimate}
\end{equation}

\subsection{Pseudocode}

We present the pseudocode for RADPO training in Algorithm~\ref{alg:radpo}.

\label{app:pseudocode}
\begin{algorithm}[t]
  \caption{Reachability-Aware Diffusion Policy Optimization (RADPO)}
  \label{alg:radpo}
  \begin{algorithmic}[1]
  \Require Cost budget $d$, candidate count $K$
  \State Initialize actor, critics, target networks, replay buffer
  $\mathcal D$, and safety multiplier $\lambda$
  \While{training}
      \State Collect experience using candidate selection and exploration
      \State Store transitions in $\mathcal D$
      \State Adapt $\lambda$ using recent episodic costs
      \hfill \Eqref{eq:lambda_update}
      \For{each gradient update}
          \State Sample $(s,a,r,c,s',d_t)$ from $\mathcal D$
          \State Compute reachability targets
          \hfill \Eqref{eq:reachability_target}
          \State Shape rewards:
          $\widetilde r = r-\beta\lambda V_{reach}(s')$
          \State Compute reward-critic targets
          \hfill \Eqref{eq:reward_backup}
          \State Sample and perturb $K$ actor candidates at $s'$
          \State Score candidates with $Q_r$ and compute weights $w_i$
          \State Update reachability and reward critics
            \State Update actor using
          $\mathcal L_\pi=\mathbb E_{s'}[
          \frac{1}{K}\sum_{i=1}^{K}
          w_i\,\ell_{\mathrm{diff}}(\theta;s',a_i)]$
          with detached candidates and weights
          \State Update exploration scale, statistics, and targets
          as scheduled
      \EndFor
  \EndWhile
  \end{algorithmic}
\end{algorithm}

% =====================================================================================
%  RADPO -- Appendix: Experimental details and hyperparameters
%
%  WHERE THIS GOES
%    Paste as a new \section AFTER \section{Implementation Details} in the appendix
%    stream, i.e. it becomes Appendix D (A Theoretical Analysis, B Task Details,
%    C Implementation Details, D this).  It defines \label{app:hyperparameters}, which
%    is what Section 5.1 already cites -- that resolves the current "Appendix ??".
%
%  ONE DELETION IN THE MAIN FILE
%    The TODO comment block inside \section{Implementation Details}
%      % TODO: Full hyperparameter table / Network architectures / Diffusion schedule
%      %       Optimizer settings / Replay buffer / Training-evaluation details
%    is fully covered here and can be removed.
%
%  PACKAGES
%    booktabs and multirow are NOT yet in the preamble -- add
%      \usepackage{booktabs}
%      \usepackage{multirow}
%    next to \usepackage{graphicx}.  Nothing else is needed: every table fits the
%    text width unscaled.  \Eqref comes from math_commands.tex.
%
%  All \ref and \citep keys below match the current main file; nothing else to fix.
% =====================================================================================

\section{Extension to Graded Costs and Multiple Constraints}
\label{app:extensions}

Since the RADPO algorithm as described assumes a single constraint with a binary per-step cost signal, in this section we discuss extension of RADPO to graded (continuous) costs and multiple constraint settings. Note that our Safety Gymnasium tasks have single constraints with binary costs by default. 

\subsection{Graded Costs}
The bounded-cost assumption in Theorem~\ref{thm:conservative_surrogate} does not require binary costs.
For any nonnegative graded cost $c_t \in [0,c_{\max}]$ and
$h_t=\mathbf{1}\{c_t>0\}$, the condition
$c_t \leq c_{\max} h_t$ holds automatically.
Thus, the cumulative-cost bound continues to apply without modification.
Binarization only removes cost-magnitude information from the predictive
reachability signal; the original graded episodic costs are still retained
by the safety-multiplier update. Hence RADPO can be applied directly to
bounded graded costs, although its reachability shaping distinguishes the
timing and likelihood of positive-cost events rather than their severity.

\subsection{Multiple Constraints}
For constraints $J_{c_i}(\pi)\leq d_i$, $i=1,\ldots,m$, RADPO can be
extended by maintaining a reachability critic and adaptive safety
multiplier for each constraint. Each multiplier is updated
from its corresponding realized episodic cost and budget, while reward
shaping becomes
\begin{equation}
    \widetilde r_t
    =
    r_t
    -
    \sum_{i=1}^{m}
    \beta_i \lambda_i V_{reach}^i(s_{t+1}).
\end{equation}
The reachability and cumulative-cost relations of
Theorem~1 then apply separately to each constraint. This preserves the
individual budgets and predictive safety signals rather than collapsing
multiple constraints into a single scalarized cost.

\section{Experimental Details and Hyperparameters}
\label{app:hyperparameters}

This appendix specifies every setting used to produce Figures~\ref{fig:benchmark_results}--\ref{fig:ablation_results}.
Section~\ref{app:hp:protocol} fixes the tasks, budgets and evaluation protocol shared by all methods;
Section~\ref{app:hp:radpo} gives RADPO's hyperparameters, split into settings that are constant across
all ten tasks and the small set that varies per task; Section~\ref{app:hp:baselines} gives the baseline
configurations; and Section~\ref{app:hp:ablation} gives the ablation arms of
Section~\ref{sec:ablations}.

\subsection{Interaction budgets and evaluation protocol}
\label{app:hp:protocol}

All experiments use Safety Gymnasium~\citep{ji2024safetygymnasiumunifiedsafereinforcement}: six MuJoCo
velocity-constrained locomotion tasks and four navigation tasks, described in
Appendix~\ref{app:task_details}. Every task uses an episode horizon of
$T=1000$ steps and the episodic cost budget $d=25$, except where a baseline's own formulation
prescribes otherwise (Section~\ref{app:hp:baselines}). Per-step costs are binary on all ten tasks, so
the binarisation $h_t=\mathbf{1}{\{c_t>0\}}$ discards no magnitude information here.

\paragraph{Evaluation.} Training curves report the episodic return and cost of the on-policy
training rollouts; the latter are used to construct the
safety-multiplier statistic. Deterministic evaluation runs $20$ episodes at the
cadence in Table~\ref{tab:tasks}. The late-training scatter of Figure~\ref{fig:pareto} averages each
run's final ten logged training points, then reports the mean and standard deviation over the runs of an
arm.

\begin{table}[h]
\centering
\caption{Tasks, interaction budgets and evaluation cadence.}
\label{tab:tasks}
\small
\begin{tabular}{llrrr}
\toprule
Family & Task & Horizon $T$ & Env.\ steps & Eval.\ every \\
\midrule
\multirow{6}{*}{Velocity}
 & SafetyAntVelocity-v1          & 1000 & $1\mathrm{M}$ & $25\mathrm{k}$ \\
 & SafetyHalfCheetahVelocity-v1  & 1000 & $1\mathrm{M}$ & $25\mathrm{k}$ \\
 & SafetyHopperVelocity-v1       & 1000 & $1\mathrm{M}$ & $25\mathrm{k}$ \\
 & SafetyHumanoidVelocity-v1     & 1000 & $3\mathrm{M}$ & $25\mathrm{k}$ \\
 & SafetySwimmerVelocity-v1      & 1000 & $1\mathrm{M}$ & $25\mathrm{k}$ \\
 & SafetyWalker2dVelocity-v1     & 1000 & $1\mathrm{M}$ & $25\mathrm{k}$ \\
\midrule
\multirow{4}{*}{Navigation}
 & SafetyCarCircle1-v0           & 1000 & $3\mathrm{M}$ & $100\mathrm{k}$ \\
 & SafetyCarGoal1-v0             & 1000 & $3\mathrm{M}$ & $100\mathrm{k}$ \\
 & SafetyPointCircle1-v0         & 1000 & $3\mathrm{M}$ & $100\mathrm{k}$ \\
 & SafetyPointGoal1-v0           & 1000 & $3\mathrm{M}$ & $100\mathrm{k}$ \\
\bottomrule
\end{tabular}
\end{table}

\paragraph{Implementation and compute.} All methods except RESPO and ALGD are implemented in a single JAX code base
that shares the environment wrappers, replay buffer, vectorised sampler, evaluation loop and logging,
hence cross-method differences come from the algorithms and not from the harness. RESPO and ALGD are run from their
authors' released implementations. Runs were executed on a heterogeneous pool of consumer and
workstation NVIDIA GPUs (GTX~1080, RTX~4090, RTX~5090, RTX~6000~Ada/Pro), one run per GPU.

\subsection{RADPO}
\label{app:hp:radpo}

\subsubsection{Settings shared by all ten tasks}

Table~\ref{tab:radpo-shared} lists every RADPO hyperparameter that is identical on all ten tasks;
Algorithm~\ref{alg:radpo} in Appendix~\ref{app:pseudocode} gives the update order these settings
parameterise.
The diffusion actor, the twin reward critics and the reachability critics are all $3\times256$ MLPs
with Mish activations; the actor additionally takes the denoising-step embedding as input
(Section~\ref{sec:dpmd_prelim}). Updates are Adam. The actor's learning rate is linearly annealed from
$3\times10^{-4}$ to $3\times10^{-5}$ between gradient steps $2.5\times10^{4}$ and $7.5\times10^{4}$;
the critic learning rates are constant. The exploration scale is
$\sigma_{\mathrm{exp}}=\sigma_0\exp(\xi)$, with $\xi$ adapted every $250$ gradient steps toward a
fixed Gaussian entropy target proportional to the action dimension. Safety enters RADPO only through
reachability shaping of the reward critic, $\widetilde r_t=r_t-\beta\lambda V_{reach}(s_{t+1})$, on
every task; the candidate weights of \Eqref{eq:dpmd_weights} carry no separate cost term.

\begin{table}[h]
\centering
\caption{RADPO hyperparameters held constant across all ten tasks.}
\label{tab:radpo-shared}
\small
\begin{tabular}{llc}
\toprule
Group & Hyperparameter & Value \\
\midrule
\multirow{5}{*}{Networks}
 & Critic / reachability-critic MLP        & $3\times256$, Mish \\
 & Diffusion actor MLP                     & $3\times256$, Mish \\
 & Denoising steps                         & $20$ \\
 & Noise schedule                          & cosine \\
 & Reward critics (twin)                   & $2$ \\
\midrule
\multirow{8}{*}{Optimisation}
 & Optimiser                               & Adam \\
 & Critic learning rate                    & $3\times10^{-4}$ \\
 & Actor learning rate                     & $3\times10^{-4}\!\rightarrow\!3\times10^{-5}$ (linear) \\
 & Replay capacity                         & $10^{6}$ transitions \\
 & Batch size                              & $256$ \\
 & Warm-up (uniform-random) transitions    & $30\mathrm{k}$ \\
 & Actor / target update period            & every $2$ gradient steps \\
 & Target smoothing $\tau_{\text{Polyak}}$ & $0.005$ \\
\midrule
\multirow{4}{*}{Candidate reweighting}
 & Candidates per state $K$                & $32$ \\
 & Reweighting temperature $\tau$          & $1.0$ \\
 & Score clip                              & $[-3,3]$ \\
 & Advantage scale $\sigma_A$              & EMA of batch std, coefficient $10^{-3}$ \\
\midrule
\multirow{3}{*}{Exploration}
 & Entropy-target update period            & every $250$ gradient steps \\
 & $\xi$ learning rate                     & $3\times10^{-4}$ \\
 & Action projection $\Pi_{\mathcal{A}}$   & clip to box bounds \\
\midrule
\multirow{6}{*}{Safety}
 & Shaping signal                           & $V_{reach}(s_{t+1})$ (reachability) \\
 & Reachability ensemble size $M$           & $8$ \\
 & Reachability loss                        & binary cross-entropy \\
 & Reachability discount $\gamma_c$         & $0.9$ \\
 & Reward discount $\gamma$                 & $0.99$ \\
\midrule
\multirow{6}{*}{Multiplier controller}
 & Budget $d$                              & $25$ \\
 & $\lambda$ initialisation                & $0$ \\
 & $\lambda$ lower bound                   & $0$ \\
 & $\lambda$ cap $\lambda_{\max}$          & $10$ \\
 & Cost window $|\mathcal{W}|$             & $10$ completion batches (App.~\ref{app:safety_multiplier_details}) \\
 & Update trigger                          & each collection event with $\ge 1$ completed episode \\
\midrule
\multirow{2}{*}{Evaluation}
 & Deterministic episodes                  & $20$ \\
 & Candidate selection at deployment       & highest reward-critic score \\
\bottomrule
\end{tabular}
\end{table}

\subsubsection{Per-task settings}

Table~\ref{tab:radpo-per-task} lists the settings that vary across tasks. We tune the
update-to-data ratio, initial exploration temperature $\alpha_0$, exploration scale
$\sigma_0$, shaping coefficient $\beta$, and controller step size $\eta_\lambda$.

Among these, $\eta_\lambda$ is the least transferable across tasks. The update
\[
\lambda \leftarrow
\Pi_{[0,\lambda_{\max}]}
\left[
\lambda + \eta_\lambda \frac{\hat{J}_c-d}{d}
\right]
\]
scales with the relative budget violation $(\hat{J}_c-d)/d$, which depends on the
task. For example, CarGoal1 requires $\eta_\lambda=0.055$, whereas the Circle tasks
use $0.005$. Swimmer also requires a relatively large value,
$\eta_\lambda=0.10$, because its episodes do not terminate early. The controller,
which updates only when episodes are completed, therefore receives substantially
fewer updates during training, and a smaller step size adapts too slowly within the
available interaction budget.

\begin{table}[h]
\centering
\caption{Per-task RADPO hyperparameters. UTD is the number of optimiser updates per collected
transition, $\alpha_0$ the initial exploration temperature
($\sigma_{\mathrm{exp}}=\sigma_0\alpha_0$ at initialisation), $\sigma_0$ the exploration scale,
$\beta$ the shaping coefficient and $\eta_\lambda$ the controller step size. Everything else is as in
Table~\ref{tab:radpo-shared}.}
\label{tab:radpo-per-task}
\small
\begin{tabular}{lrrrrr}
\toprule
Task & UTD & $\alpha_0$ & $\sigma_0$ & $\beta$ & $\eta_\lambda$ \\
\midrule
Ant          & $1.0$  & $0.2$ & $0.5$ & $1.5$   & $0.010$ \\
HalfCheetah  & $1.0$  & $0.2$ & $0.5$ & $1.5$   & $0.010$ \\
Hopper       & $0.6$  & $0.2$ & $0.5$ & $0.8$   & $0.002$ \\
Humanoid     & $1.0$  & $0.2$ & $0.1$ & $2.4$   & $0.120$ \\
Swimmer      & $0.2$  & $0.2$ & $0.5$ & $0.025$ & $0.100$ \\
Walker2d     & $0.2$  & $0.2$ & $0.5$ & $1.5$   & $0.010$ \\
\midrule
CarCircle1   & $1.0$  & $0.5$ & $0.1$ & $0.05$  & $0.005$ \\
CarGoal1     & $1.0$  & $0.5$ & $0.1$ & $0.05$  & $0.055$ \\
PointCircle1 & $1.0$  & $0.5$ & $0.1$ & $0.05$  & $0.005$ \\
PointGoal1   & $0.25$ & $0.5$ & $0.1$ & $0.10$  & $0.005$ \\
\bottomrule
\end{tabular}
\end{table}

\subsection{Baselines}
\label{app:hp:baselines}

All baselines use the same budget $d=25$ and the same interaction budgets as Table~\ref{tab:tasks}, except RESPO (which enforces a zero-cost threshold by formulation). Each baseline is initialized from its published configuration; any adaptations or tuning are stated below.

\subsubsection{SAC-Lagrangian}

Soft actor--critic with a Lagrangian multiplier updated by Adam on $\log\lambda$ from the realised
episodic cost. Two backbones are used, matching the two task families
(Table~\ref{tab:saclag}): the velocity family uses the standard SAC backbone; the Goal tasks use the
slower actor of the reference safe-RL implementations, which we verified is \emph{necessary} on these
tasks (raising the actor learning rate from $5\times10^{-6}$ to $3\times10^{-4}$ collapses
PointGoal1 to approximately zero return). The cost critic is floored at zero: because the actor minimises
$\lambda Q_c$ without a lower bound, an unfloored cost critic diverges (we observed $Q_c\!\approx\!-800$
against a theoretical range of $[0,100]$, with the run's return collapsing).

\begin{table}[h]
\centering
\caption{SAC-Lagrangian. ``Velocity'' covers the six velocity tasks and the two Circle tasks;
``Goal'' covers CarGoal1 and PointGoal1.}
\label{tab:saclag}
\small
\begin{tabular}{lcc}
\toprule
Hyperparameter & Velocity / Circle & Goal \\
\midrule
Critic / actor MLP              & $3\times256$, GELU & $2\times256$, ReLU \\
Actor learning rate             & $3\times10^{-4}$   & $5\times10^{-6}$ \\
Critic learning rate            & $3\times10^{-4}$   & $1\times10^{-3}$ \\
Entropy temperature $\alpha$    & learned, $\alpha_0=0.2$ & fixed $10^{-5}$ \\
$\alpha$ learning rate          & $3\times10^{-4}$   & --- \\
Policy delay                    & $1$                & $2$ \\
Gradient-norm clip              & ---                & $40$ \\
Multiplier learning rate        & $1\times10^{-4}$   & $1\times10^{-5}$ \\
Multiplier initialisation       & $10^{-3}$          & $0$ \\
Multiplier upper bound          & $100$              & $100$ \\
Multiplier update               & per completed episode & per gradient step after $100\mathrm{k}$ steps \\
Cost window                     & $1$                & $10$ \\
Non-negative cost critic        & yes                & yes \\
Batch size / replay             & $256$ / $10^{6}$   & $256$ / $10^{6}$ \\
UTD ratio                       & $0.25$             & $0.25$ \\
Warm-up transitions             & $30\mathrm{k}$     & $10\mathrm{k}$ \\
Discounts $\gamma,\gamma_c$     & $0.99$, $0.99$     & $0.99$, $0.99$ \\
Polyak $\tau$                   & $0.005$            & $0.005$ \\
\bottomrule
\end{tabular}
\end{table}

\subsubsection{CAL}
CAL~\citep{wu2024offpolicy} is initialized from its published configuration.
Because the published navigation experiments use fixed layouts, we retune the SAC
backbone for the randomized Circle tasks. Specifically, we change the actor
learning rate from $5\times10^{-6}$ to $3\times10^{-4}$, replace the fixed
entropy temperature $\alpha=10^{-5}$ with a learned temperature, increase the
network depth from two to three hidden layers, change the policy delay from $2$
to $1$, and adjust the discounted per-step cost threshold from $1.0$ to $0.5$.
CAL's safety-specific coefficients and multiplier update are otherwise unchanged.
The Goal tasks retain the published backbone.

Common settings are a four-member cost-critic ensemble with ReLU activations,
target updates every two gradient steps, multiplier initialization at zero,
discounts $\gamma=\gamma_c=0.99$ except $\gamma_c=0.9$ on the Circle tasks,
batch size $256$, replay capacity $10^6$, UTD ratio $0.25$, and Polyak
coefficient $\tau=0.005$. Table~\ref{tab:cal} gives the per-task
settings.

% On the six velocity tasks CAL is run on the standard SAC backbone ($3\times256$ GELU, actor learning
% rate $3\times10^{-4}$, learned $\alpha$, policy delay $1$). On the two Circle tasks we additionally
% retuned the SAC backbone CAL publishes (actor learning rate $5\times10^{-6}\!\rightarrow\!3\times10^{-4}$,
% fixed $\alpha=10^{-5}\!\rightarrow$ learned, $2\!\rightarrow\!3$ hidden layers, policy delay
% $2\!\rightarrow\!1$, and the discounted per-step cost threshold $1.0\!\rightarrow\!0.5$) leaving
% CAL's safety machinery ($k$, $c$, $d$, the multiplier step) untouched. This retune is strictly in
% CAL's favour: on PointCircle1 it raises CAL's independently evaluated return by $19\%$ and lowers its
% cost by $70\%$, and it makes CAL the only arm in this paper feasible on its mean there. On
% the two Goal tasks CAL keeps its published backbone ($2\times256$ ReLU, actor learning rate
% $5\times10^{-6}$, fixed $\alpha=10^{-5}$, policy delay $2$, critic learning rate $10^{-3}$, gradient
% clip $40$).

\begin{table}[h]
\centering
\caption{CAL per-task settings. $k$ is the cost-critic UCB coefficient, $c$ the rectification
coefficient. ``done-mask'' indicates that termination is masked in the cost backup;
``$Q_c\!\ge\!0$'' that the cost critic is floored at zero.}
\label{tab:cal}
\small
\begin{tabular}{lrrlrl}
\toprule
Task & $k$ & $c$ & Multiplier LR & Internal $d$ & Extra \\
\midrule
Ant          & $0.0$ & $0.01$ & $1\times10^{-5}$ & $25$ & --- \\
HalfCheetah  & $0.0$ & $0.3$  & $1\times10^{-4}$ & $25$ & $Q_c\!\ge\!0$ \\
Hopper       & $0.5$ & $0.2$  & $1\times10^{-5}$ & $25$ & done-mask, $Q_c\!\ge\!0$ \\
Humanoid     & $0.0$ & $0.01$ & $1\times10^{-5}$ & $25$ & done-mask \\
Swimmer      & $0.0$ & $0.01$ & $1\times10^{-5}$ & $25$ & --- \\
Walker2d     & $0.0$ & $0.2$  & $1\times10^{-4}$ & $25$ & done-mask \\
\midrule
CarCircle1   & $0.0$ & $0.05$ & $5\times10^{-7}$ & $25$ & retuned backbone, cost threshold $0.5$ \\
CarGoal1     & $0.0$ & $0.3$  & $5\times10^{-7}$ & $25$ & published backbone \\
PointCircle1 & $0.0$ & $10.0$ & $5\times10^{-7}$ & $25$ & retuned backbone, cost threshold $0.5$ \\
PointGoal1   & $0.0$ & $10.0$ & $5\times10^{-7}$ & $25$ & published backbone \\
\bottomrule
\end{tabular}
\end{table}

\subsubsection{PPO-Lagrangian}

The on-policy reference is run at its published settings, except that we reduce
the number of epochs for the navigation tasks so that their total interaction
budget matches the $3$M environment steps used by the other methods. No per-task
hyperparameter tuning is applied. Table~\ref{tab:ppolag} lists the resulting
configuration; rollout lengths follow the published protocol, while the
navigation interaction budget differs only through this change in the number of
epochs.

\begin{table}[h]
\centering
\caption{PPO-Lagrangian.}
\label{tab:ppolag}
\small
\begin{tabular}{lccc}
\toprule
Hyperparameter & Velocity & Humanoid & Navigation \\
\midrule
Policy / value MLP        & $2\times64$, tanh & $2\times64$, tanh & $2\times64$, tanh \\
Learning rate             & $3\times10^{-4}$ & $3\times10^{-4}$ & $3\times10^{-4}$ \\
Steps per epoch           & $2048$ & $6000$ & $20000$ \\
Epochs                    & $500$  & $500$  & $150$ \\
Total environment steps   & $1.024\mathrm{M}$ & $3\mathrm{M}$ & $3\mathrm{M}$ \\
Update iterations / epoch & $40$ & $40$ & $40$ \\
Minibatch size            & $64$ & $64$ & $64$ \\
Clip ratio                & $0.2$ & $0.2$ & $0.2$ \\
Target KL (early stop)    & $0.02$ & $0.02$ & $0.02$ \\
$\gamma$ / GAE $\lambda$  & $0.99$ / $0.95$ & $0.99$ / $0.95$ & $0.99$ / $0.95$ \\
Cost GAE $\lambda$        & $0.95$ & $0.95$ & $0.95$ \\
Entropy coefficient       & $0$ & $0$ & $0$ \\
Gradient-norm clip        & $40$ & $40$ & $40$ \\
Critic $\ell_2$ coefficient & $10^{-3}$ & $10^{-3}$ & $10^{-3}$ \\
Multiplier learning rate  & $0.035$ & $0.035$ & $0.035$ \\
Multiplier initialisation & $10^{-3}$ & $10^{-3}$ & $10^{-3}$ \\
Lagrange window           & $100$ epochs & $100$ epochs & $100$ epochs \\
\bottomrule
\end{tabular}
\end{table}

% \subsubsection{RESPO}

% RESPO~\citep{ganai2023iterative} is run from the authors' released implementation at its published
% per-task defaults; we changed no hyperparameter. RESPO enforces a \emph{zero}-cost threshold by
% formulation rather than the episodic budget $d=25$ used by every other method, which is why it
% appears far to the left in Figure~\ref{fig:pareto} at correspondingly lower return. Its runs log only
% on-policy rollout statistics, so RESPO appears in the training figures and is absent from the
% deterministic-evaluation panels.

\subsubsection{RESPO}
\label{app:respo}

We adapt the authors' released PyTorch
RESPO implementation~\citep{ganai2023iterative} to Safety Gymnasium,
preserving its policy objective, reachability targets, and multiplier
update. The Gaussian actor and three value networks use two
256-unit tanh layers. Training uses four MPI workers, full-batch
updates, and linearly decaying learning rates. 
We performed hyperparameter tuning on selected tasks, varying the actor learning rate,
rollout size, target KL, and initial policy standard deviation.
We additionally used observation normalization for Humanoid, which improves the performance compared to the default setting.
The selected settings in Table~\ref{tab:respo_hyperparameters}
were then held fixed across the subsequent training seeds.
All tasks use a 1,000-step episode limit and action repeat one;
observation normalization is applied only to Humanoid.

We retain RESPO's released training constraint of zero.
The common episodic cost threshold $d=25$ is used only for
reporting and assessing feasibility, not for its multiplier update.
Learning curves show training-rollout return and cost.

\begin{table}[h]
\centering
\caption{RESPO per-task settings. Budgets and rollout sizes count total
environment transitions across four workers. Actor LR and $\sigma_0$
are the initial learning rate and policy standard deviation.
Observation normalization is used only for Humanoid.}
\label{tab:respo_hyperparameters}
\small
\begin{tabular}{lrrlrr}
\toprule
Task & Budget & Rollout & Actor LR & $\sigma_0$ & Target KL \\
\midrule
Ant          & 1M & 10,000 & $6\times10^{-4}$ & $0.2$      & $0.03$ \\
HalfCheetah  & 1M & 10,000 & $6\times10^{-4}$ & $0.2$      & $0.03$ \\
Hopper       & 1M & 10,000 & $6\times10^{-4}$ & $0.2$      & $0.03$ \\
Humanoid     & 3M & 5,000  & $1\times10^{-4}$ & $0.2$      & $0.03$ \\
Swimmer      & 1M & 10,000 & $6\times10^{-4}$ & $0.2$      & $0.03$ \\
Walker2d     & 1M & 10,000 & $3\times10^{-4}$ & $0.2$      & $0.03$ \\
\midrule
CarCircle1   & 3M & 30,000 & $1\times10^{-4}$ & $0.2$      & $0.01$ \\
CarGoal1     & 3M & 30,000 & $1\times10^{-4}$ & $e^{-0.5}$ & $0.01$ \\
PointCircle1 & 3M & 30,000 & $3\times10^{-4}$ & $e^{-0.5}$ & $0.01$ \\
PointGoal1   & 3M & 30,000 & $3\times10^{-4}$ & $e^{-0.5}$ & $0.01$ \\
\bottomrule
\end{tabular}
\end{table}

% ~~~~~~~~~~~~~~~~~~~~~~~~
% ~~~~~~~~~~~~~~~~~~~~~~~~
% ~~~~~~~~~~~~~~~~~~~~~~~~
% ~~~~~~~~~~~~~~~~~~~~~~~~

\subsubsection{ALGD}
\label{app:hp:algd}

We run ALGD~\citep{cheng2026algd} from the authors' released PyTorch implementation
(commit \texttt{1f6820b}), with the training harness adapted so that ALGD sees the same
environments, horizons, budgets and evaluation as the other methods
(Table~\ref{tab:algd-protocol}). Table~\ref{tab:algd-shared} lists the released
hyperparameters, which we keep, and Table~\ref{tab:algd-per-task} the per-task settings.

\begin{table}[h]
\centering
\caption{ALGD: released training protocol versus the protocol used here.}
\label{tab:algd-protocol}
\small
\begin{tabular}{lll}
\toprule
 & Released code & This paper \\
\midrule
Environments          & Safety-Gym (\texttt{Safexp-*}), MuJoCo & Safety Gymnasium (all ten tasks) \\
Navigation layout     & fixed for every episode and seed & resampled every episode \\
Episode horizon       & $400$ (navigation)               & $1000$ \\
Goal reached          & episode terminates               & new goal spawned, episode continues \\
Cost budget           & $10$ (navigation) / raw velocity & $d=25$ on all tasks \\
Evaluation            & $1$ episode                      & $20$ deterministic episodes \\
\bottomrule
\end{tabular}
\end{table}

\paragraph{Cost threshold.} ALGD constrains a discounted cost critic, and its released code
converts an episodic budget into a per-step threshold
$h=d\,(1-\gamma_c^{H})/\bigl((1-\gamma_c)H\bigr)$. We apply this conversion on all ten tasks
with $d=25$, $\gamma_c=0.99$ and $H$ the number of decisions per episode, giving
$h\approx2.50$ for the Goal and velocity tasks ($H=1000$) and $h\approx9.19$ for the Circle
tasks ($H=250$ at action repeat $4$).

\begin{table}[h]
\centering
\caption{ALGD hyperparameters (released defaults, shared by the DDPM and VE-SDE agents).}
\label{tab:algd-shared}
\small
\begin{tabular}{llc}
\toprule
Group & Hyperparameter & Value \\
\midrule
\multirow{4}{*}{Networks}
 & Reward critics (twin)            & $2\times256$, ReLU \\
 & Cost-critic ensemble             & $4$ members, $2\times256$, SiLU \\
 & Score / noise network            & $3\times128$, ReLU, time embedding \\
 & Denoising steps $K$              & $5$ \\
\midrule
\multirow{2}{*}{Diffusion}
 & DDPM noise schedule              & linear $\beta\in[10^{-4},0.02]$ \\
 & VE-SDE noise scale               & geometric $\sigma\in[0.01,1.0]$ \\
\midrule
\multirow{3}{*}{Augmented Lagrangian}
 & Penalty coefficient $\rho$       & $1.0$ \\
 & Multiplier initialisation        & $\lambda_0=0.6931$ \\
 & Multiplier update                & Adam on $\log\lambda$, lr $3\times10^{-4}$ \\
\midrule
\multirow{2}{*}{Score estimation}
 & Monte-Carlo samples              & $4$ \\
 & Score temperature $\beta$        & $1.0$ \\
\midrule
\multirow{7}{*}{Optimisation}
 & Optimiser                        & Adam \\
 & Actor / critic / cost-critic lr  & $3\times10^{-4}$ \\
 & Batch size / replay              & $256$ / $10^{6}$ \\
 & Discounts $\gamma,\gamma_c$      & $0.99$, $0.99$ \\
 & Polyak $\tau$ / target period    & $0.005$ / every $2$ updates \\
 & Warm-up (uniform-random) steps   & $5\mathrm{k}$ \\
 & Updates per environment step     & $10$ (default), see Table~\ref{tab:algd-per-task} \\
\bottomrule
\end{tabular}
\end{table}

\begin{table}[h]
\centering
\caption{ALGD per-task settings. UTD is gradient updates per environment step.}
\label{tab:algd-per-task}
\small
\begin{tabular}{lll}
\toprule
Task & Agent & UTD \\
\midrule
Ant          & DDPM   & $10$   \\
HalfCheetah  & DDPM   & $10$   \\
Hopper       & DDPM   & $10$   \\
Humanoid     & DDPM   & $10$ \\
Swimmer      & DDPM   & $10$ \\
Walker2d     & DDPM   & $10$   \\
\midrule
CarCircle1   & VE-SDE & $10$   \\
CarGoal1     & DDPM   & $10$ \\
PointCircle1 & VE-SDE & $10$   \\
PointGoal1   & VE-SDE & $10$ \\
\bottomrule
\end{tabular}
\end{table}

\subsection{Ablation configurations}
\label{app:hp:ablation}

The ablations of Figure~\ref{fig:ablation_results} are one-factor changes to the corresponding constrained
arm of Table~\ref{tab:radpo-per-task}; every other hyperparameter is held fixed.

\begin{itemize}
\item \textbf{Discounted critic.} Replaces the reachability critic with a standard discounted cost
critic, keeping the same successor-state reward-shaping structure.
\item \textbf{Cost-only shaping.} Shapes with the realised cost,
$\tilde{r}_t=r_t-\beta\lambda c_t$, instead of the predicted reachability of the successor state.
\item \textbf{Lag-style shaping.} Removes reward shaping entirely and puts the cost term into the
candidate weights, i.e.\ the logits become $\hat{A}_i-\lambda\hat{A}^c_i$ with the cost advantage
clipped symmetrically to $[-3,3]$ to match the reward advantage's clip.
\end{itemize}

\paragraph{Gain matching.} The shaping coefficient $\beta$ deliberately \emph{differs} between arms,
because the quantity it multiplies differs in scale: a reachability value is a probability in
$[0,1]$, whereas a discounted cost value and a realised cost are not. Holding $\beta$ fixed would
therefore run the alternatives at a different \emph{penalty magnitude} and measure ``too much
penalty'' rather than ``worse safety signal''. We instead measured the realised magnitude of each
shaping term on a matched run and set $\beta$ so that the mean penalty matches RADPO's;
Table~\ref{tab:ablation} reports the resulting $\beta$.
% An earlier version of this ablation that shared $\beta$ across arms was discarded for exactly this reason.

\begin{table}[h]
\centering
\caption{Ablation arms. Every arm shares its task's $\eta_\lambda$, $\sigma_0$, $\alpha_0$, UTD and
interaction budget with the RADPO arm beside it, and all settings of
Table~\ref{tab:radpo-shared}.}
\label{tab:ablation}
\small
\begin{tabular}{llllr}
\toprule
Task & Arm & Cost critic & Shaping target & $\beta$ \\
\midrule
\multirow{4}{*}{CarGoal1}
 & RADPO              & reachability & $V_{reach}(s')$ & $0.05$ \\
 & discounted critic  & discounted   & $V_{c}(s')$ & $0.01$ \\
 & cost-only shaping  & - & $c_t$     & $0.05$ \\
 & lag-style shaping  & discounted   & none (cost in weights) & --- \\
\midrule
\multirow{4}{*}{PointCircle1}
 & RADPO              & reachability & $V_{reach}(s')$ & $0.05$ \\
 & discounted critic  & discounted   & $V_{c}(s')$ & $0.01$ \\
 & cost-only shaping  & - & $c_t$     & $0.05$ \\
 & lag-style shaping  & discounted   & none (cost in weights) & --- \\
\midrule
\multirow{4}{*}{HalfCheetah}
 & RADPO              & reachability & $V_{reach}(s')$ & $1.5$ \\
 & discounted critic  & discounted   & $V_{c}(s')$ & $0.254$ \\
 & cost-only shaping  & -   & $c_t$     & $2.467$ \\
 & lag-style shaping  & discounted   & none (cost in weights) & --- \\
\bottomrule
\end{tabular}
\end{table}

% ~~~~~~~~~~~~~~~~~~~~
% ~~~~~~~~~~~~~~~~~~~~
% ~~~~~~~~~~~~~~~~~~~~

\subsection{Deterministic-evaluation curves}
\label{app:hp:eval_curves}

Figure~\ref{fig:benchmark_results} in the main text plots the on-policy training rollouts, which are
the statistics the safety-multiplier controller regulates. Figure~\ref{fig:benchmark_eval} plots the
deterministic-evaluation counterpart over the same interaction budgets: $20$ evaluation episodes at
every checkpoint, with exploration noise disabled. The two views can differ appreciably in absolute
cost, thus we report both rather than choosing one.

\begin{figure*}[t]
    \centering
    \includegraphics[width=\textwidth]{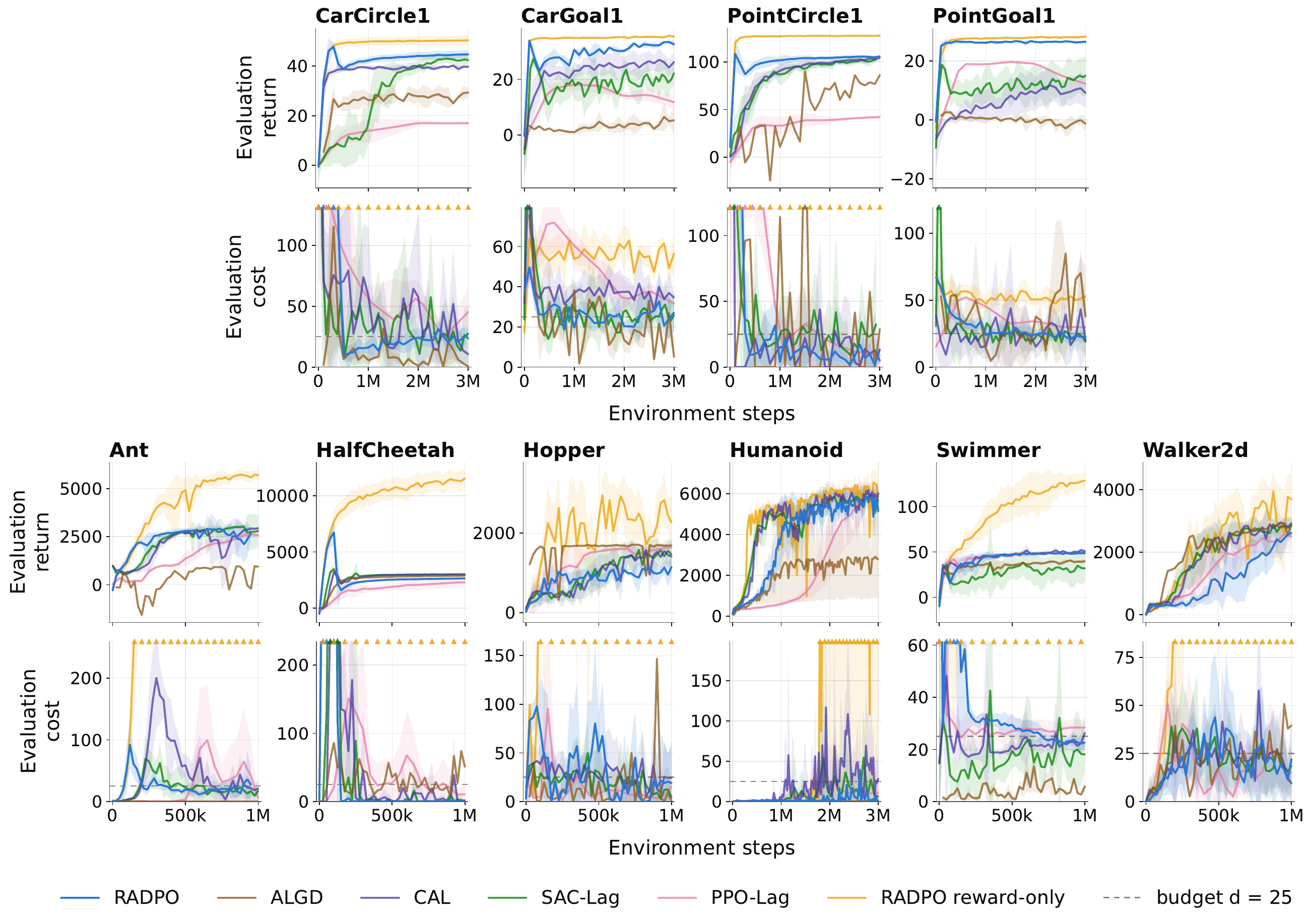}
    \caption{
    \textbf{Deterministic-evaluation return and cost across the ten tasks.}
    Evaluation return (upper row of each block) and evaluation cost (lower row), measured over $20$
    deterministic episodes per checkpoint; the training-rollout counterpart is
    Figure~\ref{fig:benchmark_results}. Curves are means over the runs of an arm with one standard
    deviation shaded, and dashed horizontal lines mark the budget $d=25$.
    }
    \label{fig:benchmark_eval}
\end{figure*}

\end{document}